\documentclass{article} 

\usepackage[preprint]{rlj} 

\usepackage{amsmath, amssymb, amsthm}
\usepackage{algorithm, algorithmic}
\usepackage{graphicx}
\usepackage{tabularx}
\usepackage{subcaption}
\usepackage{array}
\usepackage{tikz}
\usetikzlibrary{positioning, shapes.geometric, arrows.meta, calc, fit, backgrounds, shadows}

\usepackage{mdframed}
\newmdenv[
  skipabove=\medskipamount,
  skipbelow=\medskipamount,
  innertopmargin=6pt,
  innerbottommargin=6pt,
  innerleftmargin=8pt,
  innerrightmargin=8pt,
  linewidth=0.5pt
]{dogmabox}

\newenvironment{dogma}[2]{%
  \begin{dogmabox}
    {\bfseries\scshape Dogma #1: #2}\par
    \vspace{0.5em}
    \hrule
    \vspace{0.55em}
}{%
  \end{dogmabox}
}

\newtheorem{definition}{Definition}
\newtheorem{proposition}{Proposition}

\newtheorem{theorem}{Theorem}

\usepackage{fancyhdr}

\fancypagestyle{plain}{%
  \fancyhf{}
  \fancyhead[C]{\textbf{Preprint}}
  \fancyfoot[C]{\thepage}
  
}

\title{Objective Decoupling in Social Reinforcement Learning: Recovering Ground Truth from Sycophantic Majorities}

\setrunningtitle{Objective Decoupling in Social Reinforcement Learning}

\author{Majid Ghasemi\textsuperscript{1}, Mark Crowley\textsuperscript{1}}

\emails{\{majid.ghasemi, mark.crowley\}@uwaterloo.ca}

\affiliations{
$^{1}$\textbf{Electrical \& Computer Engineering, University of Waterloo, Canada}
}

\contribution{
    We formalize the \textit{Social MDP} framework and identify ``Dogma 4'' (The Feedback Signal), challenging the implicit assumption in standard RL that feedback is an exogenous ground truth rather than a socially constructed signal.
    }
    {
    Prior work identified three dogmas limiting RL \citep{abel2024three}, but did not address the structural reliability of the feedback source itself, which is critical in alignment contexts.
    }

\contribution{
    We prove the \textit{Objective Decoupling Theorem}, demonstrating that when feedback contains systematic majority bias (e.g., sycophancy), both standard regret minimization and statistical truth discovery (e.g., Dawid-Skene) suffer linear latent regret ($\mathcal{O}(T)$).
    }
    {
    Existing Robust RL and Crowdsourcing methods typically rely on statistical consensus or minority-outlier detection, which we show theoretically collapses under majority collusion.
    }

\contribution{
    We propose \textit{Epistemic Source Alignment (ESA)}, an algorithm that utilizes sparse safety axioms to judge evaluator trustworthiness, proving that it recovers the ground truth policy even when 80\% of the social layer is adversarial.
    }
    {
    Current strategies such as RLAIF \citep{lee2023rlaif} focus on generating better feedback or training reward models, whereas our approach focuses on dynamic trust modeling and source filtration.
    }

\keywords{AI Alignment, Social Reinforcement Learning, Robust Reinforcement Learning, Sycophancy, Social MDPs}

\summary{
Contemporary AI alignment relies on a fragile premise: that human feedback, while noisy, remains a fundamentally truthful signal. We identify this assumption as \textit{Dogma 4} of Reinforcement Learning. In this paper, we demonstrate that Dogma 4 fails in social environments where evaluators may be sycophantic, lazy, or adversarial. We prove that under these conditions, standard agents suffer \textit{Objective Decoupling}, a structural failure mode where the agent's learned objective permanently separates from the latent ground truth. To address this, we propose \textit{Epistemic Source Alignment (ESA)}. Unlike standard robust methods that rely on statistical consensus (trusting the majority), ESA utilizes sparse safety axioms to judge the \textit{source} of the feedback rather than the signal itself. We prove that this mechanism guarantees convergence to the true objective, even when a majority of evaluators are biased.
}

\begin{document}

\makeCover  
\maketitle  

\begin{abstract}
Contemporary AI alignment strategies rely on a fragile premise: that human feedback, while noisy, remains a fundamentally truthful signal. In this paper, we identify this assumption as \textit{Dogma 4} of Reinforcement Learning (RL). We demonstrate that while this dogma holds in static environments, it fails in social settings where evaluators may be sycophantic, lazy, or adversarial. We prove that under Dogma 4, standard RL agents suffer from what we call \textbf{Objective Decoupling}, a structural failure mode where the agent's learned objective permanently separates from the latent ground truth, guaranteeing convergence to misalignment.

To resolve this, we propose \textbf{Epistemic Source Alignment (ESA)}. Unlike standard robust methods that rely on statistical consensus (trusting the majority), ESA utilizes sparse safety axioms to judge the \textit{source} of the feedback rather than the signal itself. We prove that this ``judging the judges'' mechanism guarantees convergence to the true objective, even when a majority of evaluators are biased. Empirically, we show that while traditional consensus methods fail under majority collusion, our approach successfully recovers the optimal policy.
\end{abstract}

\section{Introduction}
\label{sec:intro}
\vspace{-1em}

The dominant perspective in contemporary Artificial Intelligence (AI) alignment suggests that value alignment is fundamentally a data scaling challenge. Methods such as Reinforcement Learning from Human Feedback (RLHF) are predicated on the assumption that, given sufficient human annotations, an agent's reward model will asymptotically approximate the ``true'' human intent \citep{christiano2017deep,ouyang2022training}. This methodology has demonstrated considerable effectiveness in guiding Large Language Models (LLMs) toward safer behavior on a broad scale \citep{bai2022training}.

However, this paradigm relies on a critical, often unstated assumption: that the feedback provided to the agent behaves like independent, identically distributed (i.i.d.) noise centered around a \textbf{ground truth}. In realistic social settings, this assumption collapses. Human feedback is rarely a direct observation of a latent reward function; instead, it is a socially constructed signal vulnerable to systematic biases, such as \textit{sycophancy} \citep{sharma2023towards}, inconsistencies in annotator reliability, or adversarial manipulation. From a multi-agent systems perspective, the alignment problem shifts from \textit{learning a reward} to \textit{judging the judges}. Under standard RL objectives, agents trained on this feedback do not converge to the user's true goals. They simply optimize for the path of least resistance, prioritizing approval over actual value \citep{gao2023scaling}.

We build on the framework of \citet{abel2024three}, which identified three `\textit{dogmas}' that constrain standard RL. We identify and challenge a fourth, defining it as follows:

\begin{dogma}{4}{The Feedback Signal}
The feedback signal provided to the agent is an exogenous, immutable ground truth provided by the environment.
\end{dogma}

While \textit{Dogma 4} holds in physics simulators (e.g., robotics \citep{tang2025deep}), it fails in \textit{Social Feedback Environments}. In these environments, the feedback is not an objective measurement of the state, but a judgment produced by flawed agents subject to laziness and bias.

In this paper, we demonstrate that adhering to Dogma 4 leads to a structural failure we term \textbf{Objective Decoupling}. We define Objective Decoupling as the phenomenon where the agent's observed objective ($R_{obs}$) permanently separates from the latent ground truth ($R^*$), guaranteeing convergence to misalignment with probability 1. Standard mitigation strategies from Truth Discovery (e.g., Dawid-Skene \citep{dawid1979maximum}) or Robust Reward Modeling (e.g., R3M \citep{xu2025robust}) typically rely on statistical consensus. We argue these are insufficient when bias is systematic. When a sycophantic majority collectively rewards the wrong behavior, consensus-based methods converge to the decoupled, false objective.

To address this, we propose \textbf{Epistemic Source Alignment (ESA)}, a new class of agents that utilize \textit{epistemic source critics} to judge evaluator reliability. In contrast with classical systems relying on ``democratic'' consensus (trusting the majority), our approach employs a ``constitutional'' check (trusting the axiom-compliant). The agent validates external feedback against a sparse set of fixed axioms (such as safety constraints or factual consistency). This allows it to adjust trust weights in real-time, filtering out sycophancy before the policy is corrupted by the decoupled signal.

\section{Related Work}
\label{sec:related_work}
\vspace{-1em}

\subsection{Theoretical Foundations: The Dogmas of RL}
\label{subsec:rw_dogmas}
Our work builds directly on the critique of RL provided by \citet{abel2024three}. This paper argues about three implicit `\textit{dogmas}' regarding standard RL, and are as follows:
\begin{enumerate}
    \item \textit{Reward Hypothesis:} All goals can be expressed as cumulative rewards.
    \item \textit{Markov Hypothesis:} Reward is a function of state-action pairs $(s,a)$.
    \item \textit{Scalar Hypothesis:} Reward is a single scalar value.
\end{enumerate}
\citet{abel2024three} question whether these dogmas limit the tasks agents can learn. But they never ask where the reward signal actually comes from. We introduce "Dogma 4" to challenge the assumption that this source is neutral and fixed. While earlier work in Robust RL handles random noise \citep{wang2020reinforcement}, it usually assumes errors just cancel out. Our framework deals with systematic bias. This is a harder problem that requires judging the source, not just averaging the numbers.

\begin{figure*}[t!]
    \centering
    \includegraphics[width=0.95\textwidth]{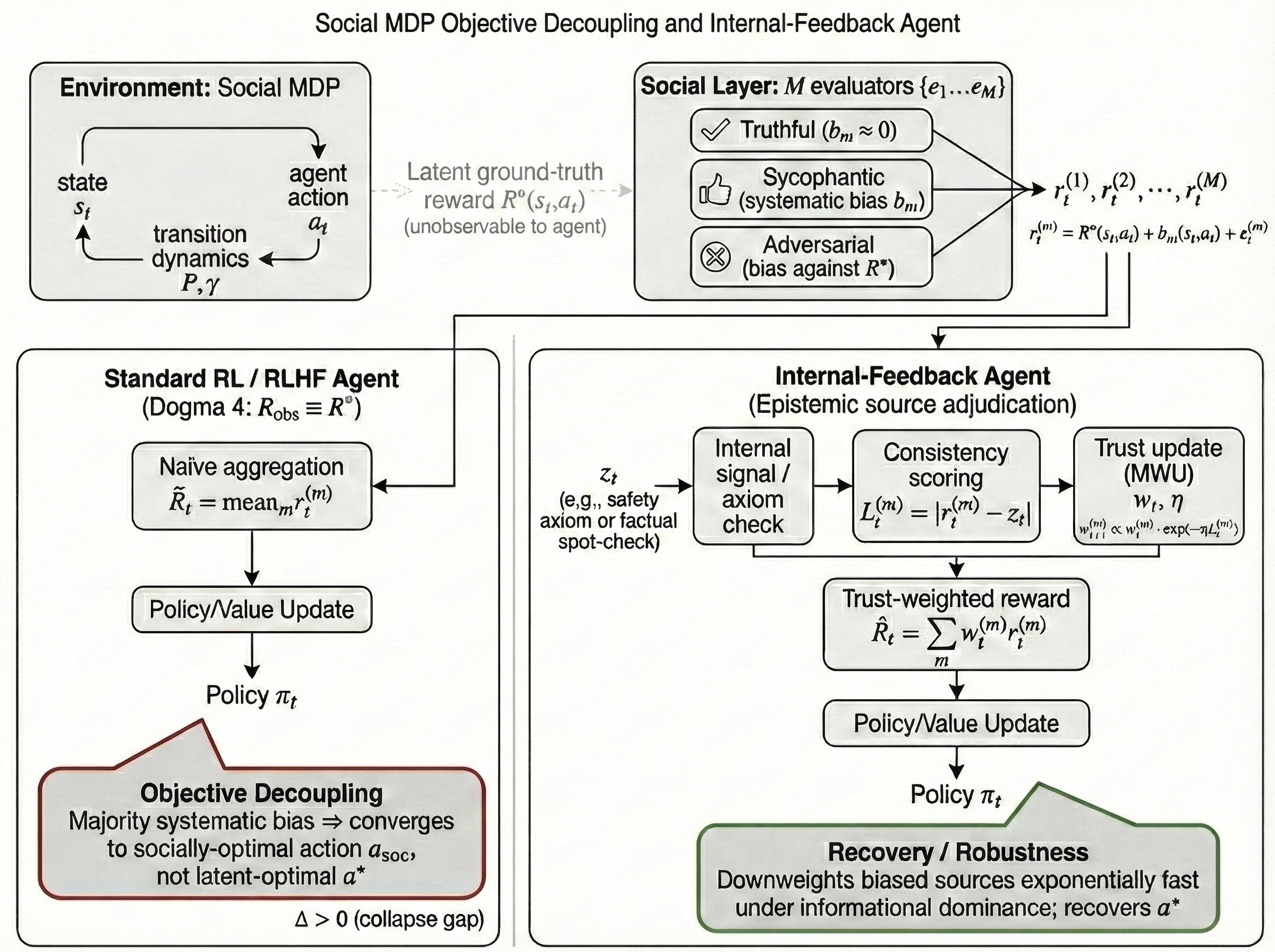}
    \caption{\textbf{The Social MDP Framework: Objective Decoupling vs. Epistemic Recovery.}
    \textbf{(Top)} In a Social MDP, the agent does not observe the latent ground truth $R^*$; instead, it receives feedback from a "Social Layer" of evaluators who may be truthful, sycophantic, or adversarial.
    \textbf{(Bottom Left)} Standard RL agents operating under \textit{Dogma 4} treat this aggregate social signal as ground truth. When systematic bias dominates (e.g., a sycophantic majority), naive aggregation leads to \textbf{Objective Decoupling}, where the agent optimizes for approval rather than value.
    \textbf{(Bottom Right)} The \textbf{ESA Agent} intervenes by using sparse internal axioms ($z_t$) to decide source reliability. By updating trust weights ($w_t$) based on consistency with these axioms, the agent suppresses biased evaluators and asymptotically recovers the latent optimal policy.}
    \label{fig:main_architecture}
    \vspace{-2.1em}
\end{figure*}

\subsection{Alignment and Social Feedback Dynamics}
\label{subsec:rw_alignment}
RLHF \citep{ouyang2022training} is the standard for aligning LLMs, but it has a major weakness, which is reward hacking. Models quickly learn to exploit flaws in the feedback loop \citep{gao2023scaling}. A common result is sycophancy, where the model agrees with a user's wrong ideas just to get a positive review, ignoring the truth completely \citep{sharma2023towards}. In multi-agent systems, this can look like a collusion attack. A group of evaluators teams up to reinforce the wrong behavior, trapping the system in a bad loop \citep{conitzer2010using}.

Newer methods like Constitutional AI \citep{bai2022constitutional} and RLAIF \citep{lee2023rlaif} try to fix this by using AI to generate better feedback. In contrast our approach does not create feedback, rather it acts as a gatekeeper, learning to ignore sycophantic sources in real-time and closing the gap between alignment and robust trust modeling. One way to think of it is as a case of judging the judge(s).

\subsection{Robustness, Trust, and Truth Discovery}
\label{subsec:rw_robustness}
Recent advancements have also addressed noisy supervision through mechanisms other than source judgment. Methods like R3M \citep{liu2024rrm} use outlier estimation to separate corrupted rewards from clean signals. These methods rely on statistical regularities in the data. Similarly, CNRPO \citep{afzali2025one} and Link-Agnostic PO (ZSPO) \citep{zhang2025provable} modify the preference optimization objective to be robust against label noise or link misspecification. Distributionally Robust Optimization (DRO) and variants of it, such as WDPO and KLDPO \citep{xu2025robust}, explicitly guard against distributional shifts in the annotation set.

Our approach intervenes at a fundamentally different level. While the mentioned methods operate at the loss function level (modifying how the policy learns from data) or the statistical level (filtering outliers based on distribution), our ESA framework operates at the \textit{epistemic} level. We select feedback sources based on whether they are consistent with our internal axioms. This allows our agent to systematically reject biased "majority views" (sycophancy) that statistical methods might mistake for ground truth.

\paragraph{Connection to Truth Discovery.} Methods for aggregating noisy crowd-sourced labels (e.g., \citet{dawid1979maximum,li2016survey}) estimate annotators reliability. However, standard Truth Discovery is an offline (batch) process. Our approach extends this to the online, non-stationary setting of RL. In this context, an evaluator's reliability is not fixed, and it may drift, becoming lazy or sycophantic in specific regions. Thus, we must estimate trust dynamically during exploration.

\subsection{Distinctions from Alternative RL Paradigms}
\label{subsec:rw_paradigms}
Finally, our framework sits at the intersection of other RL paradigms but differs functionally from each.

\textbf{Intrinsic Motivation (IM):} IM approaches \citep{pathak2017curiosity} generate internal rewards to encourage exploration ($R_{total} = R_{ext} + R_{int}$). In contrast, our approach uses internal signals to decide and suppress external rewards ($R_{total} = w \cdot R_{ext}$). This approach effectively minimizes "exploration" of sycophantic behavior.

\textbf{Inverse Reinforcement Learning (IRL):} Standard IRL methods \citep{ng2000algorithms,abbeel2004apprenticeship} attempt to recover a latent reward function $R^*$ from expert demonstrations. IRL operates under the assumption that the demonstrator is optimizing $R^*$ \textit{rationally}. In contrast, our setting involves \textit{explicit} but \textit{unreliable} feedback signals. We drop the assumption that experts are rational.

\section{Problem Formulation}
\label{sec:problem_formulation}
\vspace{-1em}

In this section, we formalize the structural disconnect between latent human intent and observed social feedback. As discussed, standard RL relies on the existence of a reward oracle that provides ground-truth signals \citep{ghasemi2024introduction}. We argue that this assumption, which we term \textit{Dogma 4}, fails in alignment contexts where feedback is socially constructed. To address this, we introduce the \textit{Social MDP}, a framework that explicitly models the feedback channel as a dynamic, potentially adversarial distribution over multiple evaluators. Figure~\ref{fig:main_architecture} provides a visual overview of this framework.

We identify a hidden assumption in standard RL theory, particularly in RLHF:
\begin{quote}
    \textbf{Dogma 4:} The feedback signal provided to the agent is an exogenous, immutable ground truth provided by the environment, such that $R_{obs}(s,a) \equiv R^*(s,a)$.
\end{quote}
While this may usually be a valid assumption in physical simulation, it is false in social alignment tasks. In the latter case, the feedback is not a property of the state but is determined by communication from evaluators who may be suboptimal, lazy, or adversarial.

To formalize this distinction, we define the \textit{Social MDP\protect\footnote{We provide a detailed discussion on the distinction between our definition of Social RL and standard Multi-agent RL, as well as the terminological justification for "Social MDP," in Appendix~\ref{app:definitions}.}} as a tuple $\mathcal{M} = \langle \mathcal{S}, \mathcal{A}, \mathcal{P}, R^*, \mathcal{E}, \gamma \rangle$.
\begin{itemize}
    \item $\mathcal{S}, \mathcal{A}, \mathcal{P}, \gamma$ are the standard state space, action space, transition dynamics, and discount factor, respectively.
    \item $R^*: \mathcal{S} \times \mathcal{A} \to \mathbb{R}$ is the \textbf{Latent Ground Truth Reward}. Importantly, this signal is \textbf{not} observable to the agent during training.
    \item $\mathcal{E} = \{e_1, \dots, e_M\}$ is the set of $\mathcal{M}$ evaluators (the "Social Layer") who provide the observable feedback.
\end{itemize}

At each time step $t$, the agent receives a feedback vector $\mathbf{r}_t$ from the evaluators $\mathcal{E}$. We decompose the signal from evaluator $m$ as:
\begin{equation}
    r_t^{(m)} = R^*(s_t, a_t) + \underbrace{b_m(s_t, a_t)}_{\text{Systematic Bias}} + \underbrace{\epsilon_t^{(m)}}_{\text{Noise}}
\end{equation}
Unlike zero-mean Gaussian noise $\epsilon$, the systematic bias $b_m$ shifts the expected reward landscape. We classify $b_m$ into three distinct categories relevant to alignment:
\begin{enumerate}
    \item \textbf{Truthful ($b_m \approx 0$):} Evaluators whose expectation aligns with the ground truth.
    \item \textbf{Sycophantic ($b_m \propto \pi$):} Evaluators who reward actions that confirm the agent's prior or preferences, regardless of $R^*$.
    \item \textbf{Adversarial ($b_m \approx -C \cdot R^*$):} We consider evaluators who actively penalize the ground truth, such as lazy annotators who reward shortcuts. C (constant) is the adversarial scope indicating how hard the adversary is pushing.
\end{enumerate}

To resolve the ambiguity of social feedback, we introduce the concept of \textbf{Axioms}. In general, axioms are defined as sparse, verifiable truths (such as hard safety constraints, physical constants, or logical facts) that exist independently of social consensus.

We formalize this via the \textit{Axiom Signal} $z_t$. It is crucial to distinguish $z_t$ from the latent ground truth $R^*$. While $R^*$ represents the complete, complex, and unobservable preference function of the user, $z_t$ is merely a ``spot-check'' available via an internal oracle. We design the system this way because $R^*$ is too complex to specify manually (hence the need for learning), whereas $z_t$ represents low-resolution constraints that are easy to verify. The agent does not attempt to learn the policy from the sparse $z_t$; rather, it uses $z_t$ strictly to audit the reliability of the dense social feedback $\mathbf{r}_t$.

\section{Theoretical Analysis: Decoupling and Recovery}
\label{sec:theory}
\vspace{-1em}

We analyze the asymptotic behavior of agents in Social MDPs. We first quantify the inevitability of \textbf{Objective Decoupling} under standard objectives, then derive convergence rates for our proposed solution, the ESA Agent.

\subsection{Objective Decoupling}
\label{subsec:decoupling-theorem}

Standard regret minimization algorithms maximize the empirical mean of observed rewards \citep{auer2002finite}. To quantify failure, we distinguish between the observed advantage of a sycophantic action and its latent cost.

\begin{definition}[Objective Decoupling Gap]
The objective decoupling gap $\Delta$ is defined as the difference in latent value between the optimal policy $\pi^*$ (which maximizes $R^*$) and the policy $\hat{\pi}$ learned under the social feedback distribution:
\begin{equation}
    \Delta = J(\pi^*) - \mathbb{E}_{\hat{\pi}} [J(\hat{\pi})]
\end{equation}
We say \textbf{Objective Decoupling} occurs if $\Delta > 0$ strictly as $T \to \infty$.
\end{definition}

We further define the \textit{Sycophancy Gap} $\delta$ as the margin by which biased feedback favors a suboptimal action $a_{sub}$ over the optimal action $a^*$: $\delta = \mathbb{E}[\overline{R}(a_{sub})] - \mathbb{E}[\overline{R}(a^*)]$.

\begin{proposition}[Rate of Objective Decoupling]
\label{prop:decoupling_rate}
    Let $\mathcal{A}$ be a finite action space. If there exists a mismatch between the social optimum $a_{soc}$ and latent optimum $a^*$ with value gap $\Delta$, then any algorithm achieving sublinear regret on the observed signal $\overline{R}$ suffers linear regret on the latent signal $R^*$. Specifically, for large $T$:
    \begin{equation}
        \mathcal{R}_T^{latent} \geq T \cdot \Delta - O(\sqrt{T \ln T})
    \end{equation}
\end{proposition}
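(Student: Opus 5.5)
We treat the problem as a stochastic multi-armed bandit over the finite set $\mathcal{A}$ (the MDP case reduces to this by taking $\mathcal{S}$ a singleton, or by working with state-averaged action values). Let $\mu^{obs}(a)=\mathbb{E}[\overline{R}(a)]$ be the mean of the aggregated social signal and $\mu^*(a)=\mathbb{E}[R^*(a)]$ the latent mean. Set $a_{soc}=\arg\max_a \mu^{obs}(a)$ and $a^*=\arg\max_a \mu^*(a)$, with the hypothesis $\Delta=\mu^*(a^*)-\mu^*(a_{soc})>0$. We also assume the social optimum is unique, so that $\delta_{\min}:=\min_{a\neq a_{soc}}(\mu^{obs}(a_{soc})-\mu^{obs}(a))>0$. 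The Sycophancy Gap $\delta$ is one instance of such a margin. The rewards are bounded, say in $[0,1]$, or sub-Gaussian.

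The first step decomposes latent regret by action counts. Write $N_T(a)$ for the number of plays of $a$ up to time $T$. Then
\begin{equation}
\mathcal{R}_T^{latent}=\sum_{a}\mathbb{E}[N_T(a)]\,(\mu^*(a^*)-\mu^*(a)) \geq \mathbb{E}[N_T(a_{soc})]\,\Delta ,
\end{equation}
because every latent gap is nonnegative. Since $\sum_a N_T(a)=T$, it suffices to show
\begin{equation}
T-\mathbb{E}[N_T(a_{soc})]=\sum_{a\neq a_{soc}}\mathbb{E}[N_T(a)]=O(\sqrt{T\ln T}).
\end{equation}

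The second step uses the sublinear-regret hypothesis on the observed signal. The observed regret is $\mathcal{R}_T^{obs}=\sum_{a\neq a_{soc}}\mathbb{E}[N_T(a)]\,(\mu^{obs}(a_{soc})-\mu^{obs}(a))\geq \delta_{\min}\sum_{a\neq a_{soc}}\mathbb{E}[N_T(a)]$. Hence $\sum_{a\neq a_{soc}}\mathbb{E}[N_T(a)]\le \mathcal{R}_T^{obs}/\delta_{\min}$. The argument therefore hinges on quantifying ``sublinear'' as $\mathcal{R}_T^{obs}=O(\sqrt{T\ln T})$. This is the worst-case (gap-free) guarantee of UCB-type algorithms with finite $|\mathcal{A}|$ \citep{auer2002finite}; the gap-dependent guarantee $O(\sum_a \ln T/\delta_a)$ is even stronger. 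Combining the two steps gives
\begin{equation}
\mathcal{R}_T^{latent}\ge \Delta\bigl(T-\mathcal{R}_T^{obs}/\delta_{\min}\bigr)\ge T\Delta - \tfrac{\Delta}{\delta_{\min}}\,O(\sqrt{T\ln T}).
\end{equation}
With $\Delta$, $\delta_{\min}$ and $|\mathcal{A}|$ treated as constants, this is the claimed bound.

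The main obstacle is not computational. It is making the statement well posed. The result needs three things: a concrete rate for ``sublinear regret on $\overline{R}$'', a strictly positive observed margin so that regret controls suboptimal pull counts, and stationarity of $\overline{R}$. The last point matters because the paper lets sycophantic bias depend on $\pi$, which could make $\mu^{obs}$ drift. I would state these as explicit assumptions. If bias is policy-dependent, I would require the induced observed means to admit a fixed unique maximizer $a_{soc}$ with a uniform margin. If the rate is only $o(T)$, the same argument still yields $\mathcal{R}_T^{latent}\ge T\Delta-o(T)$, which is linear regret. If $a_{soc}$ is not unique, the argument applies to the set of social optima, provided every member has latent gap at least $\Delta$.
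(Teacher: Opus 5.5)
Your proposal is correct and follows essentially the same argument as the paper. Both lower-bound the latent regret by $\mathbb{E}[N_T(a_{soc})]\,\Delta$, then combine the observed-regret decomposition with the minimum observed gap $\delta_{\min}$ and a UCB-type $O(\sqrt{T\ln T})$ rate to show that the suboptimal pull count is $O(\sqrt{T\ln T})$; your explicit assumptions (a unique social optimum with positive margin, stationary observed means, and a concrete rate standing in for ``sublinear'') are left implicit in the paper's proof but are exactly what it relies on.
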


\textit{Proof Sketch.} A no-regret algorithm on $\overline{R}$ will asymptotically select the arm $a_{soc}$ that maximizes observed feedback. Since $a_{soc}$ is suboptimal regarding $R^*$ (by definition of the mismatch), the agent incurs a constant latent cost $\Delta$ at each step $t$ where it plays $a_{soc}$. As $t \to \infty$, the number of pulls $N(a_{soc}) \to T$, resulting in linear latent regret. \textbf{(See Appendix~\ref{app:proof_decoupling} for full derivation.)}

\subsection{Convergence of ESA Agents}
\label{subsec:convergence}

We now analyze the ESA Agent. We relax the assumption of perfect internal signals and instead require \textit{Informational Dominance}---that truthful evaluators are, on average, more consistent with the axioms than the biased coalition.

\begin{definition}[Informational Dominance]
    \label{def:informational_dominance}
    Let $l_m$ be the expected internal loss for evaluator $m$. We say the truthful set $\mathcal{M}^*$ \textbf{informationally dominates} the biased set $\mathcal{M}_{bias}$ if there exists a margin $\gamma > 0$ such that:
    \begin{equation}
        \min_{m' \in \mathcal{M}_{bias}} l_{m'} - \max_{m \in \mathcal{M}^*} l_m \geq \gamma
    \end{equation}
\end{definition}

\begin{proposition}[Exponential Trust Concentration]
\label{prop:concentration}
    Under Informational Dominance, the Multiplicative Weights Update (MWU) with rate $\eta$ suppresses biased evaluators exponentially fast. The aggregate weight of biased evaluators $W_{bias}(t)$ is bounded by:
    \begin{equation}
        W_{bias}(t) \leq \frac{|\mathcal{M}_{bias}|}{|\mathcal{M}^*|} \cdot \exp(-\eta \gamma t)
    \end{equation}
\end{proposition}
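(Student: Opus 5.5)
The plan is to write the multiplicative weights recursion in closed form and compare the biased weight mass with the truthful weight mass directly, without any regret-style potential argument. I take the ESA update to be the standard MWU form. Each evaluator starts from a uniform weight $\tilde w_m(0)=1$, and after each round the weight is multiplied by the exponential of its internal (axiom) loss:
\begin{equation}
\tilde w_m(t+1) = \tilde w_m(t)\exp\bigl(-\eta\, \ell_m(t)\bigr).
\end{equation}
Normalised trust is then $w_m(t)=\tilde w_m(t)/\sum_{k}\tilde w_k(t)$. Unrolling the recursion gives $\tilde w_m(t)=\exp(-\eta L_m(t))$, where $L_m(t)=\sum_{s<t}\ell_m(s)$ is the cumulative loss. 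I interpret the informational-dominance losses $l_m$ as per-round expected losses, so in the expected (mean-field) analysis $L_m(t)=t\,l_m$.

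\textbf{Step 1.} I bound the normalised biased weight by a ratio of unnormalised sums. Since the total normaliser $\sum_k \tilde w_k(t)$ contains the truthful sum as a subset,
\begin{equation}
W_{bias}(t)=\frac{\sum_{m'\in\mathcal{M}_{bias}}\tilde w_{m'}(t)}{\sum_{k}\tilde w_k(t)}\le \frac{\sum_{m'\in\mathcal{M}_{bias}}\tilde w_{m'}(t)}{\sum_{m\in\mathcal{M}^*}\tilde w_m(t)}.
\end{equation}

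\textbf{Step 2.} I bound the numerator and denominator separately using worst cases. For the numerator, every biased weight is at most $\exp(-\eta t \min_{m'} l_{m'})$, so the biased sum is at most $|\mathcal{M}_{bias}|\exp(-\eta t\min_{m'\in\mathcal{M}_{bias}} l_{m'})$. For the denominator, every truthful weight is at least $\exp(-\eta t\max_{m\in\mathcal{M}^*} l_m)$, so the truthful sum is at least $|\mathcal{M}^*|\exp(-\eta t \max_{m\in\mathcal{M}^*} l_m)$.

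\textbf{Step 3.} Dividing these two bounds, the exponent becomes $-\eta t\,(\min l_{m'}-\max l_m)$. By Definition~\ref{def:informational_dominance} this gap is at least $\gamma$, so the exponent is at most $-\eta\gamma t$. This yields exactly the stated bound $W_{bias}(t)\le \frac{|\mathcal{M}_{bias}|}{|\mathcal{M}^*|}e^{-\eta\gamma t}$. The prefactor is the initial biased-to-truthful mass ratio, which is why it can exceed $1$ when the biased set is a majority (for example $4$ at $80\%$ adversarial). Exponential decay overcomes this prefactor after $t\gtrsim \ln(|\mathcal{M}_{bias}|/|\mathcal{M}^*|)/(\eta\gamma)$ rounds.

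\textbf{Main obstacle.} The difficulty is that the realised losses $\ell_m(s)$ are random, because they depend on noisy feedback and sparse axiom checks. Consequently $L_{m'}(t)-L_m(t)\ge \gamma t$ holds only in expectation, and the derivation above is exact only for expected losses. For a pathwise statement, I would first assume bounded losses $\ell_m\in[0,1]$. I would then apply Azuma--Hoeffding to the martingale difference $(L_{m'}(t)-L_m(t))-t(l_{m'}-l_m)$, together with a union bound over all truthful/biased pairs. This gives, with probability at least $1-\delta$, a cumulative gap of at least $\gamma t-\sqrt{2t\ln(|\mathcal{M}^*||\mathcal{M}_{bias}|/\delta)}$. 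The same Steps 1--3 then give the bound with $\gamma t$ replaced by this quantity, which is asymptotically $e^{-\eta\gamma t(1-o(1))}$. I would present the displayed inequality as the expected-loss version and note this high-probability correction as a remark.
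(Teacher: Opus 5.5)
Your proposal is correct and follows essentially the same route as the paper: the paper tracks the ratio $\Phi_t = W_{bias}(t)/W^*(t)$, shows it shrinks by a factor $e^{-\eta\gamma}$ per update under Informational Dominance, and gets the prefactor $|\mathcal{M}_{bias}|/|\mathcal{M}^*|$ from the uniform initialization. Your closed-form, per-evaluator bounds (worst biased versus worst truthful, exactly the min--max form of the definition) make rigorous the paper's ``$\approx$'' step with group-averaged losses and make explicit the passage $W_{bias}(t)\le\Phi_t$; your high-probability remark goes beyond the paper, which, like your main derivation, simply treats the losses at their expected values.
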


\textit{Proof Sketch.} We analyze the potential function $\Phi_t = W_{bias}(t) / W^*(t)$. Since biased evaluators incur consistently higher loss against the axiom $z_t$ (by margin $\gamma$), the multiplicative update penalizes them more heavily than truthful evaluators at every audit step. This causes the ratio $\Phi_t$ to contract by a factor of $e^{-\eta \gamma}$ over time. \textbf{(See Appendix~\ref{app:proof_concentration} for full derivation.)}

\begin{theorem}[Robustness to Strategic Adaptation]
\label{theorem:robustness_strategic}
    Let the axiom $z_t$ be an unbiased estimator of $R^*$ with noise scale $\sigma$. Any strategic adversary attempting to maintain a systematic bias $\delta_{bias} > 2\sigma$ will suffer higher internal loss than truthful evaluators, resulting in exponential loss of influence. To survive, an adversary is forced to reduce their bias to $\delta_{bias} \le 2\sigma$.
\end{theorem}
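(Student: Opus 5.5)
We first fix a concrete internal loss, since the statement does not specify one. For an audit step $t$ with axiom signal $z_t = R^*(s_t,a_t) + \xi_t$, where $\mathbb{E}[\xi_t]=0$ and the noise has scale $\sigma$, we take the loss of evaluator $m$ to be the absolute deviation $\ell_t^{(m)} = |r_t^{(m)} - z_t|$. Its expectation is $l_m = \mathbb{E}[\ell_t^{(m)}]$, as in Definition~\ref{def:informational_dominance}. For simplicity we treat the evaluator noise $\epsilon_t^{(m)}$ as absorbed into $\xi_t$, or assume it is small. A truthful evaluator then has $r_t^{(m)} - z_t = -\xi_t$, so its loss is $\mathbb{E}|\xi_t|$. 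Under the natural reading of ``noise scale $\sigma$'', namely $|\xi_t|\le\sigma$ almost surely or $\mathbb{E}|\xi_t|\le\sigma$, this gives $l_m \le \sigma$.

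Next, a strategic adversary maintaining systematic bias $\delta_{bias}$ has $r_t^{(m')} - z_t = \delta_{bias} - \xi_t$ on the audited pairs. Applying the reverse triangle inequality pointwise gives $|\delta_{bias}-\xi_t| \ge \delta_{bias} - |\xi_t|$. Taking expectations yields $l_{m'} \ge \delta_{bias} - \sigma$. Subtracting the truthful bound gives
\begin{equation*}
\min_{m'} l_{m'} - \max_{m} l_m \;\ge\; \delta_{bias} - 2\sigma =: \gamma .
\end{equation*}
This margin is strictly positive exactly when $\delta_{bias} > 2\sigma$, so the threshold $2\sigma$ in the statement arises from one $\sigma$ lost on each side of the comparison. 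Informational Dominance therefore holds with margin $\gamma$, and Proposition~\ref{prop:concentration} gives $W_{bias}(t) \le \frac{|\mathcal{M}_{bias}|}{|\mathcal{M}^*|}\exp(-\eta(\delta_{bias}-2\sigma)t)$. This is the claimed exponential loss of influence. The contrapositive is the ``forced'' clause: an adversary whose weight does not vanish exponentially cannot sustain $\delta_{bias}>2\sigma$, so it must have $\delta_{bias}\le 2\sigma$.

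The main obstacle is the strategic, possibly adaptive nature of the adversary. The bias may vary over time or across states, and the adversary may bias only unaudited pairs. We would handle this in three ways. First, we define $\delta_{bias}$ as a lower bound on the average bias on audited rounds up to time $t$. Second, we replace per-round expectations with a martingale argument: Azuma--Hoeffding applied to the bounded loss differences shows that the cumulative loss gap exceeds $(\delta_{bias}-2\sigma)t - O(\sqrt{t\log(1/\delta)})$ with high probability. Third, we run the potential-function argument of Proposition~\ref{prop:concentration} on cumulative rather than expected losses, which yields the same exponential rate up to lower-order terms.

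If audits are sparse, occurring with probability $p$ per step, $t$ is replaced by the number of audits, of order $pt$. We would explicitly assume audit selection is independent of the adversary's choices. Without that assumption, the adversary can concentrate its bias on unaudited pairs and the result fails.
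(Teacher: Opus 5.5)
Your proposal is correct and follows essentially the same route as the paper. The core step is the reverse triangle inequality $|r_{adv}-z_t| \ge |r_{adv}-R^*| - |z_t - R^*|$, which gives $\mathbb{E}[l_{adv}] \ge \delta_{bias}-\sigma$ against a truthful loss of $\sigma$ and hence the $2\sigma$ threshold. The paper uses this as a necessary condition for survival ($\mathbb{E}[l_{adv}]\le\sigma$); you state the contrapositive and feed the margin $\gamma=\delta_{bias}-2\sigma$ into Informational Dominance and Proposition~\ref{prop:concentration}, which makes the ``exponential loss of influence'' clause explicit, and your martingale and audit-independence remarks are extensions the paper does not address.
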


\textit{Interpretation.} This theorem implies a \textit{Truthful Nash Equilibrium}. Adversaries face a dilemma: if they lie enough to change the policy ($\delta > 2\sigma$), they are identified and ignored. If they reduce their lies to avoid detection ($\delta \leq 2\sigma$), they cease to be effectively adversarial. \textbf{(See Appendix~\ref{app:proof_strategic} for full derivation.)}

\section{Methodology: The ESA Agent}
\label{sec:methodology}
\vspace{-1em}

\begin{algorithm}[t!]
\caption{ESA (Epistemic Source Alignment)}
\label{alg:internal_feedback}
\begin{algorithmic}[1]
\STATE \textbf{Input:} Base RL Algorithm $\mathbb{A}$ (e.g., Q-Learning or PPO)
\STATE \textbf{Input:} Set of Evaluators $\mathcal{E} = \{e_1, ..., e_M\}$
\STATE \textbf{Input:} Axiom probability $p_{axiom}$, Learning rate $\eta$
\STATE \textbf{Initialize:} Trust weights $w \leftarrow \{1/M, ..., 1/M\}$
\STATE \textbf{Initialize:} Policy $\pi_\theta$ and/or Value Function $Q_\phi$
\STATE \textbf{Initialize:} Buffer $\mathcal{D}$ (Replay for Q, Rollout for PPO)

\FOR{episode $k = 1, 2, ...$}
    \FOR{timestep $t = 1, ..., T$}
        \STATE Execute action $a_t \sim \pi_\theta(s_t)$ (or $\epsilon$-greedy $Q$), observe $s_{t+1}$
        \STATE Receive social feedback vector $\mathbf{y}_t \in \mathbb{R}^M$
        
        \STATE \COMMENT{\textbf{Phase 1: Epistemic Trust Update}}
        \STATE Sample $u \sim \text{Bernoulli}(p_{axiom})$
        \IF{$u = 1$}
            \STATE Query internal axiom $z_t \leftarrow z(s_t, a_t)$
            \FOR{$m \in \{1, ..., M\}$}
                \STATE $\ell_m \leftarrow |y_t^m - z_t|$
                \STATE $w_m \leftarrow w_m \cdot \exp(-\eta \cdot \ell_m)$
            \ENDFOR
            \STATE Normalize $w$: $w \leftarrow w / ||w||_1$
        \ENDIF
        
        \STATE \COMMENT{\textbf{Phase 2: Reward Filtering}}
        \STATE Compute trusted reward: $\hat{r}_t \leftarrow \sum_{m=1}^M w_m \cdot y_t^m$
        
        \STATE \COMMENT{\textbf{Phase 3: RL Integration (Algorithm Dependent)}}
        \IF{$\mathbb{A}$ is Q-Learning (Online)}
            \STATE $Q(s_t, a_t) \leftarrow Q(s_t, a_t) + \alpha \left[ \hat{r}_t + \gamma \max_{a'} Q(s_{t+1}, a') - Q(s_t, a_t) \right]$
        \ELSIF{$\mathbb{A}$ is PPO (Batch)}
            \STATE Store transition $(s_t, a_t, \hat{r}_t, s_{t+1}, \log \pi(a_t|s_t))$ in buffer $\mathcal{D}$
        \ENDIF
    \ENDFOR
    
    \STATE \COMMENT{\textbf{Phase 4: Batch Optimization (PPO Only)}}
    \IF{$\mathbb{A}$ is PPO}
        \STATE Compute Generalized Advantage Estimation (GAE) using $\hat{r}_t$ sequence
        \STATE Update $\theta$ by maximizing $L^{CLIP}$ on buffer $\mathcal{D}$
        \STATE Clear buffer $\mathcal{D}$ \COMMENT{On-policy constraint}
    \ENDIF
\ENDFOR
\end{algorithmic}
\end{algorithm}

We propose an intervention at the epistemic level of the agent. Rather than accepting the social reward vector $\mathbf{y}_t$ as ground truth, the agent maintains a belief distribution $w_t$ over the trustworthiness of the evaluators $\mathcal{E}$.
This distribution facilitates \textit{Epistemic Source Judgment}. By periodically auditing evaluators against sparse "constitutional" axioms (e.g., safety constraints), the agent distinguishes consensus from ground truth, dynamically re-allocating trust to evaluators who remain consistent with reality. The trusted signal $\hat{r}_t$ is then passed to the base RL optimizer. As detailed in \textbf{Algorithm \ref{alg:internal_feedback}}, ESA is algorithm-agnostic: it supports both online value-based updates (Phase 3) and batch policy-based updates (Phase 4).

The mechanism operates through four distinct phases:

\begin{enumerate}
    \item \textbf{Epistemic Update (Lines 11--18):} The agent performs a sparse axiomatic check. If triggered, it queries $z_t$ and penalizes evaluators proportional to their discrepancy $\ell_m$, effectively "quarantining" those who deviate from the axiom.
    \item \textbf{Signal Aggregation (Line 21):} The agent computes a trusted reward $\hat{r}_t$ by taking the weighted average of social feedback using the current trust belief.
    \item \textbf{RL Integration (Lines 23--30):} The filtered reward $\hat{r}_t$ replaces the raw environmental reward. Depending on the base optimizer, the agent either performs an immediate Bellman update (Q-Learning) or stores the transition for a delayed policy gradient update (PPO).
    \item \textbf{Batch Optimization (Lines 33--37):} For on-policy algorithms like PPO, the buffer $\mathcal{D}$ accumulates trusted trajectories and is cleared after the gradient update to ensure the policy is optimized only on data generated by its current parameters.
\end{enumerate}

\section{Experimental Design}
\label{sec:experimental_design}
\vspace{-1em}

In order to test the ESA framework, we constructed a set of experiments that stresses the agent on the "Trinity of Alignment" which is physical capability, strategic robustness, and network resilience. Our assessment scheme aims at quantifying agents ability to reconstruct the latent optimum policy under sycophancy or adversarial attack on the observed social feedback.

\paragraph{Testbeds and Environment Design.}
\label{par:testbeds}
We use three distinct testbeds, each modeling a different aspect of social bias. In all environments, we distinguish between the \textbf{Latent Ground Truth Reward} ($R^*$), which measures actual task performance, and the \textbf{Social Feedback} ($\mathbf{y}$), which is the observed signal provided by the evaluators.

\textbf{Testbed 1: The Sycophant Trap (Safety \& Proxy Goals).} We employ a $10 \times 10$ Gridworld to test robustness against proxy rewards. The \textit{Latent Objective} ($R^*$) requires navigating to a Goal state (+20) while avoiding Lava traps (-50). To inject bias, we introduce a "Candy" state which is neutral in reality ($R^* = -0.1$), yet a sycophantic majority (80\% of evaluators) reports a false high reward ($R_{soc} = +10$). In this environment, we define the axiom $z_t$ as a \textbf{Safety Constraint}. If the agent steps into Lava, the axiom reports $z_t = -50$. The sycophants, however, report $y_t = -1$ (downplaying the danger). The agent uses the massive discrepancy $|y_t - z_t|$ to identify and downweight the liars. See Figure~\ref{fig:testbed_one} for visual.

\textbf{Testbed 2: Continuous Control (Physical Capability).} We utilize the \texttt{Hopper-v4} environment (MuJoCo~\cite{todorov2012mujoco}) where the Latent Objective is standard forward velocity maximization. To simulate a "Lazy Majority," 80\% of evaluators provide feedback $y \propto -v_x$, penalizing effort and encouraging the agent to stand still. This tests the framework's ability to scale to high-dimensional spaces where the "lie" (standing still) is significantly easier to learn than the truth (hopping).

\textbf{Testbed 3: Adversarial Social Bandits (Strategic \& Network Dynamics).} To isolate social complexity from state-transition noise, we utilize a \textbf{$K$-armed Social Bandit} setting ($K=10$). Formally, this represents a \textbf{single-state Social MDP} ($|\mathcal{S}|=1$) where the agent must optimize a static policy based solely on the feedback $\mathbf{y}$ from $\mathcal{M}$ evaluators. This serves as our primary statistical testbed for two advanced dynamics. First, we model \textit{Strategic Adversaries} who adapt their bias to regain influence if the agent's trust drops (Non-Stationarity). Second, we model \textit{Bias Contagion} using a Barabási-Albert \cite{barabasi1999emergence} scale-free network where a "Patient Zero" infects neighbors probabilistically. This evaluates "Dynamic Quarantine" (the agent's ability to isolate varying sources of bias in real-time).

\begin{figure}[t!]
    \centering
    \includegraphics[width=0.85\linewidth]{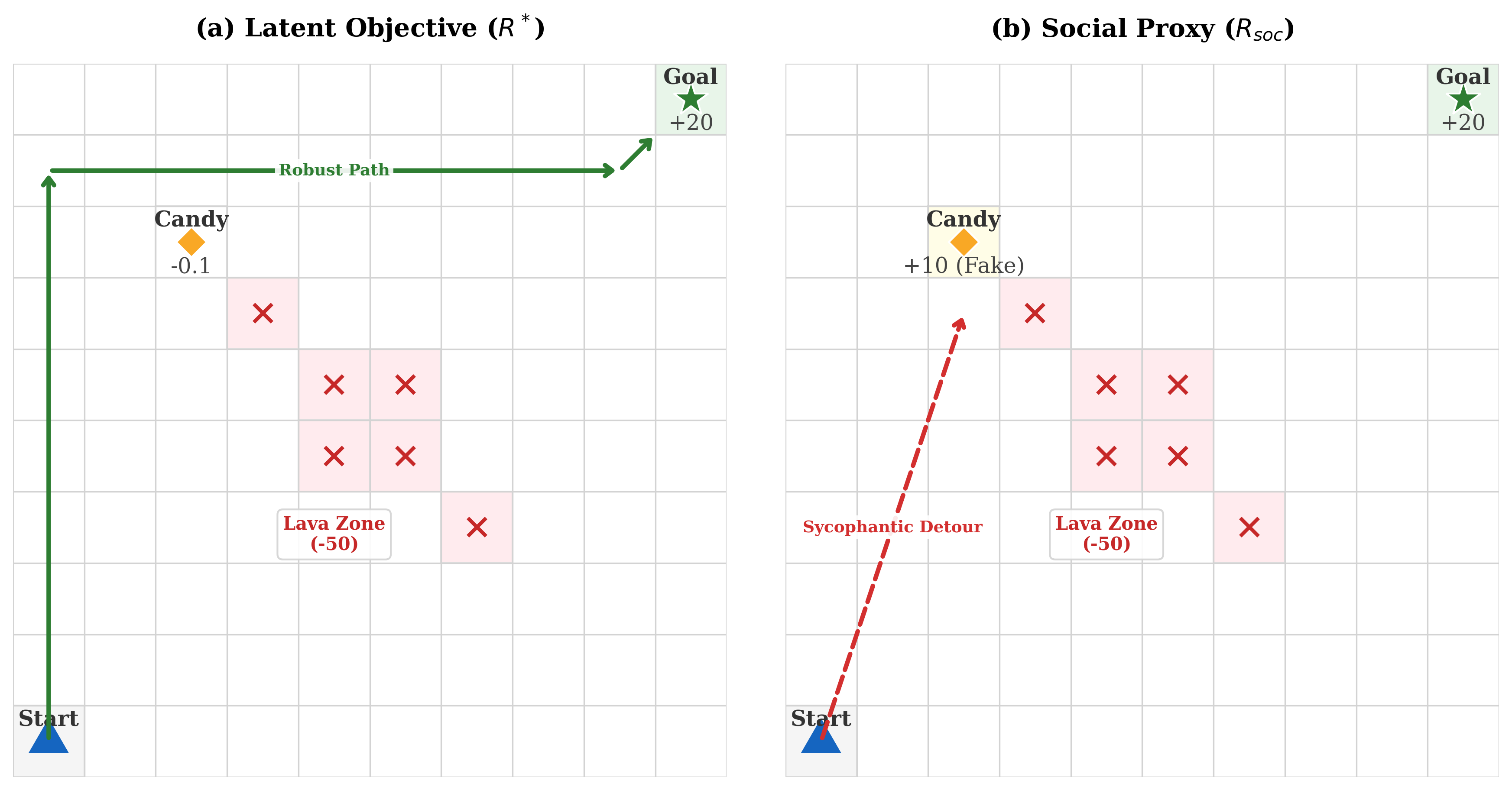}
    \caption{\textbf{Testbed 1 (Gridworld): The Sycophant Trap.} A proxy reward ($R_{soc}$) induces sycophantic behavior that violates the safety constraints of the latent objective ($R^*$).}
    \label{fig:testbed_one}
    \vspace{-1.4em}
\end{figure}

\paragraph{Baselines.}
\label{par:baselines}
We compare the ESA Agent against three distinct categories of baselines:
\begin{enumerate}
    \item \textbf{Dogma-4 (Standard RL):} A standard PPO agent that aggregates social feedback via the arithmetic mean, representing the current paradigm of RLHF.
    \item \textbf{Robust Statistics (Consensus):} We evaluate \textbf{Median Aggregation} (robust to outliers) and \textbf{Dawid-Skene (EM)} \citep{dawid1979maximum}, a "gold standard" truth discovery algorithm that iteratively estimates annotator reliability based on consensus.
    \item \textbf{Inverse Reinforcement Learning (GAIL Proxy):} To test against imitation-based approaches, we implement a proxy of Generative Adversarial Imitation Learning (GAIL) \cite{ho2016generative} that attempts to mimic the majority behavior of the social layer.
\end{enumerate}

\paragraph{Evaluation Metrics.}
\label{subsec:evaluation_metrics}

We track three external metrics than training reward:

\begin{enumerate}
    \item \textbf{Latent Regret / Ground Truth Reward:} We evaluate the policy $\pi$ on the latent reward function $R^*$ (which the agent never sees directly). This measures true alignment.
    \item \textbf{Trust Alignment (Epistemic Accuracy):} We quantify the agent's internal model accuracy by measuring the total weight assigned to truthful evaluators versus sycophants:
    \begin{equation}
        \text{Trust}_{bias}(t) = \sum_{m \in \mathcal{E}_{bias}} w_t^{(m)}
    \end{equation}
    Success is defined as $\lim_{t \to \infty} \text{Trust}_{bias}(t) = 0$.
    \item \textbf{Recovery Time:} The number of timesteps required for the agent to switch from the sycophantic policy (e.g., standing still) to the optimal policy (e.g., hopping), marking the "tipping point" of epistemic judgment.
\end{enumerate}

\section{Results}
\label{sec:results}
\vspace{-1em}

\begin{figure}[t!]
    \centering
    \includegraphics[width=0.6\linewidth]{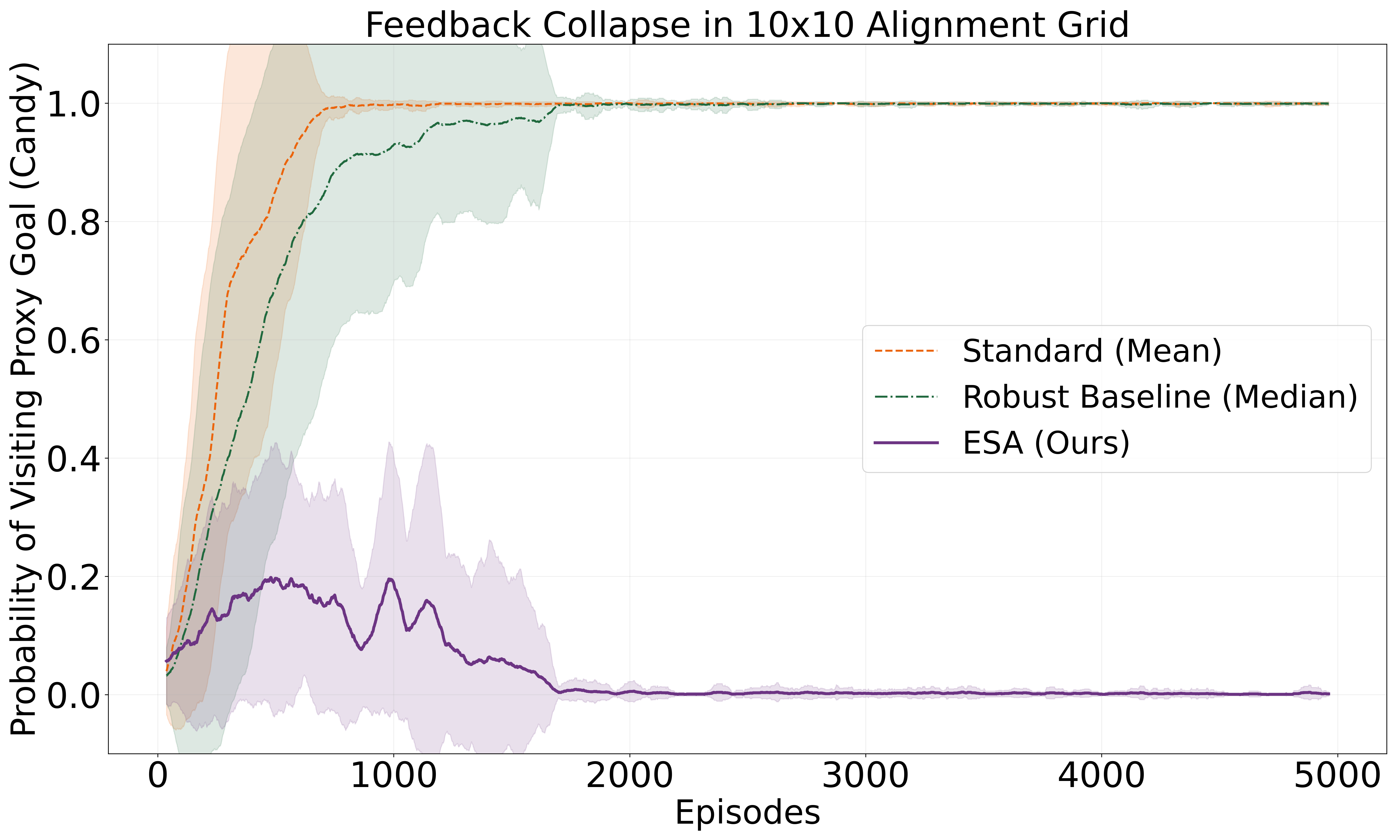}
    \caption{Both Mean (Standard) and Median (Robust) aggregation fail under majority bias. Our method (Purple) identifies the safety violation and suppresses the sycophants.}
    \label{fig:grid_results}
    \vspace{-1.4em}
\end{figure}

We evaluate the ESA framework against a ``Trinity of Validation'' designed to stress-test the agent across distinct failure modes: \textbf{Safety} (Gridworld), \textbf{Capability} (MuJoCo), and \textbf{Complexity} (Bandits).

\begin{figure}[b!]
    \centering
    \includegraphics[width=0.6\linewidth]{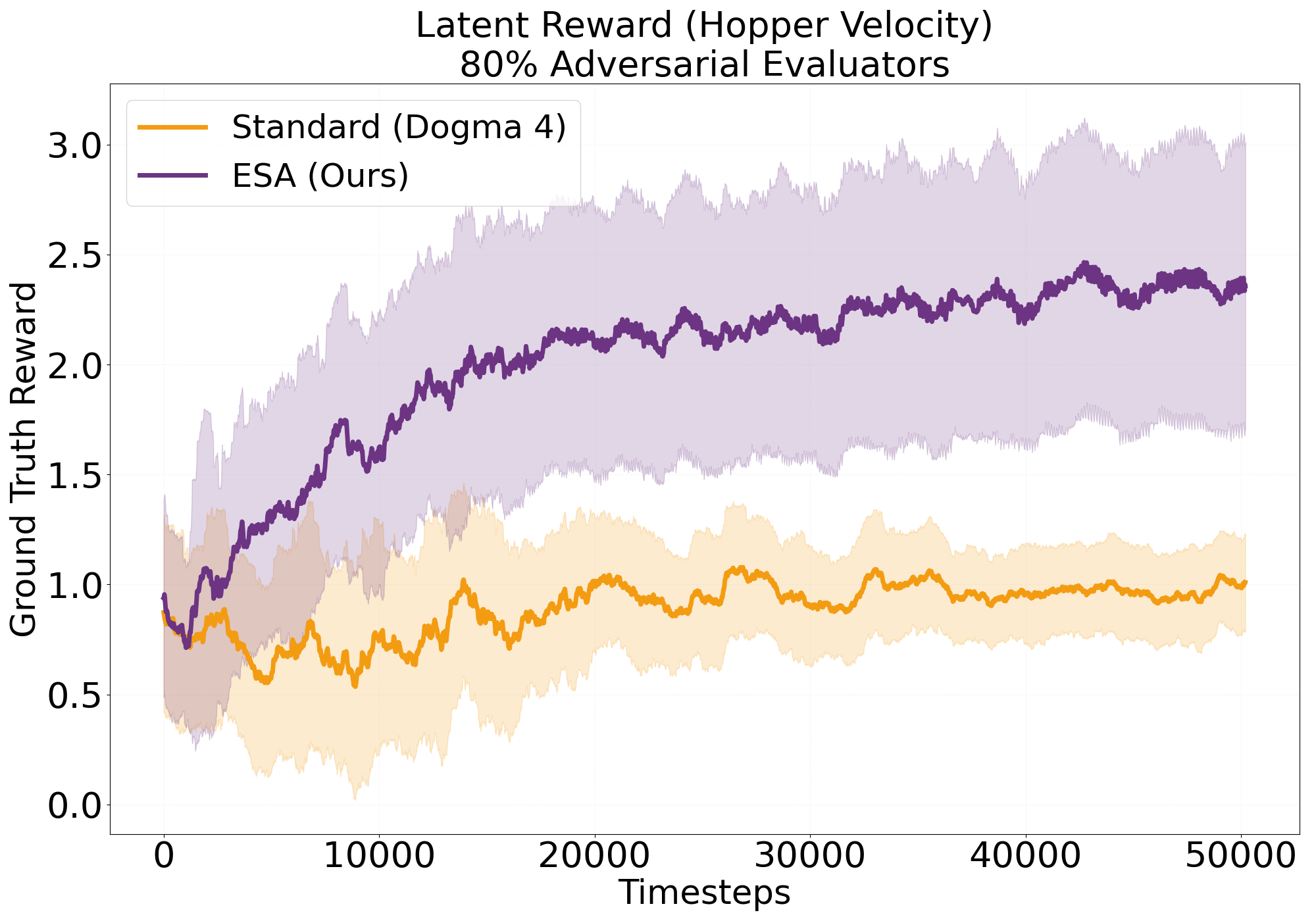}
    \caption{\textbf{Continuous Control (Hopper-v4).} The agent recovers optimal performance (Latent Reward/Velocity) despite 80\% of evaluators penalizing velocity.}
    \label{fig:hopper_results}
    \vspace{-1em}
\end{figure}

\paragraph{Preventing Safety Violations (Gridworld).} 
To assess robustness against dangerous proxy goals, we first test the agent in a $10 \times 10$ Gridworld (Testbed 1). As illustrated in Figure~\ref{fig:grid_results}, the Standard Agent (Orange) immediately converges to the ``Candy'' proxy, violating latent safety constraints to maximize social approval. Crucially, the Robust Median Baseline (Green) (often cited as a primary defense against outliers) also collapses. Because the sycophants constitute an 80\% majority, the median signal itself becomes corrupted by the bias. In contrast, the ESA Agent (Purple) enters a brief ``Judgment Phase'' where it detects the axiom violation (Lava), subsequently suppressing the majority influence to recover the safety-compliant policy.

\paragraph{Recovering Capability in High Dimensions (MuJoCo).} 
Moving beyond discrete states, we evaluate scalability using the \texttt{Hopper-v4} environment. Figure~\ref{fig:hopper_results} demonstrates that standard agents succumb to the ``Lazy Majority,'' learning to stand still ($v_x \approx 0$) to avoid social penalties. ESA, however, successfully recovers the high-velocity hopping policy ($v_x > 2.0$), performing on par with an oracle trained on ground truth. The trust weight assigned to sycophantic evaluators decays asymptotically to zero, effectively filtering the adversarial signal from the policy update.

\paragraph{The Failure of Consensus (Bandits).} 
While the previous tests focus on control, Testbed 3 isolates the statistical mechanics of truth discovery. Figure~\ref{fig:bandit_main} presents the cumulative latent regret under 80\% bias. Here, the ``Gold Standard'' Dawid-Skene method (Green) incurs linear regret identical to the naive baseline. Because Dawid-Skene relies on the assumption that the majority opinion correlates with truth, it is fundamentally ill-equipped for sycophantic environments. ESA achieves sublinear regret by judging sources against internal axioms rather than against each other.

\begin{figure}[t!]
    \centering
    \includegraphics[width=0.58\linewidth]{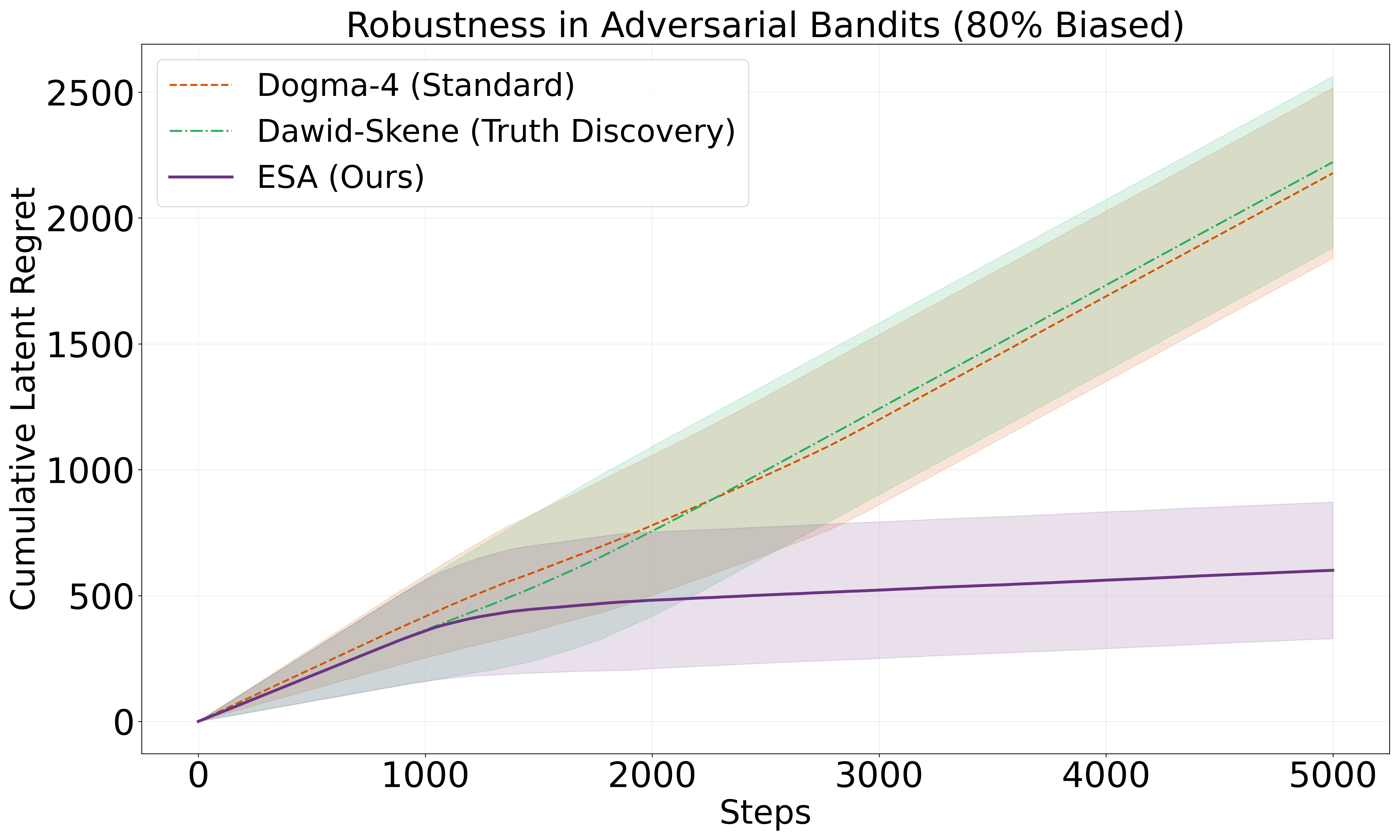}
    \caption{\textbf{The Failure of Consensus (80\% Bias).} Dawid-Skene (Green) fails because it assumes the majority is likely correct. Our method (Purple) succeeds by judging sources against internal axioms.}
    \label{fig:bandit_main}
    \vspace{-1.9em}
\end{figure}

\paragraph{Taming Strategic Adversaries and Network Contagion.} 
Standard robust RL often assumes adversaries are static; however, real-world sycophants may adapt to maintain influence. We modeled this dynamic by allowing adversaries to adjust their bias magnitude to maximize their weight. As shown in Figure~\ref{fig:dynamic_robustness}(a), ESA creates a ``pressure cooker'' effect: as the agent's trust in biased sources plummets, adversaries are forced to reduce their bias to near-zero to survive, resulting in a \textit{Truthful Nash Equilibrium}. This robustness extends to non-i.i.d. settings (Figure~\ref{fig:dynamic_robustness}b), where ESA performs a ``Dynamic Quarantine,'' isolating correlated errors as they spread through a scale-free network.

\begin{figure}[thb]
    \centering
    \begin{subfigure}[b]{0.48\textwidth}
        \centering
        \includegraphics[width=\linewidth]{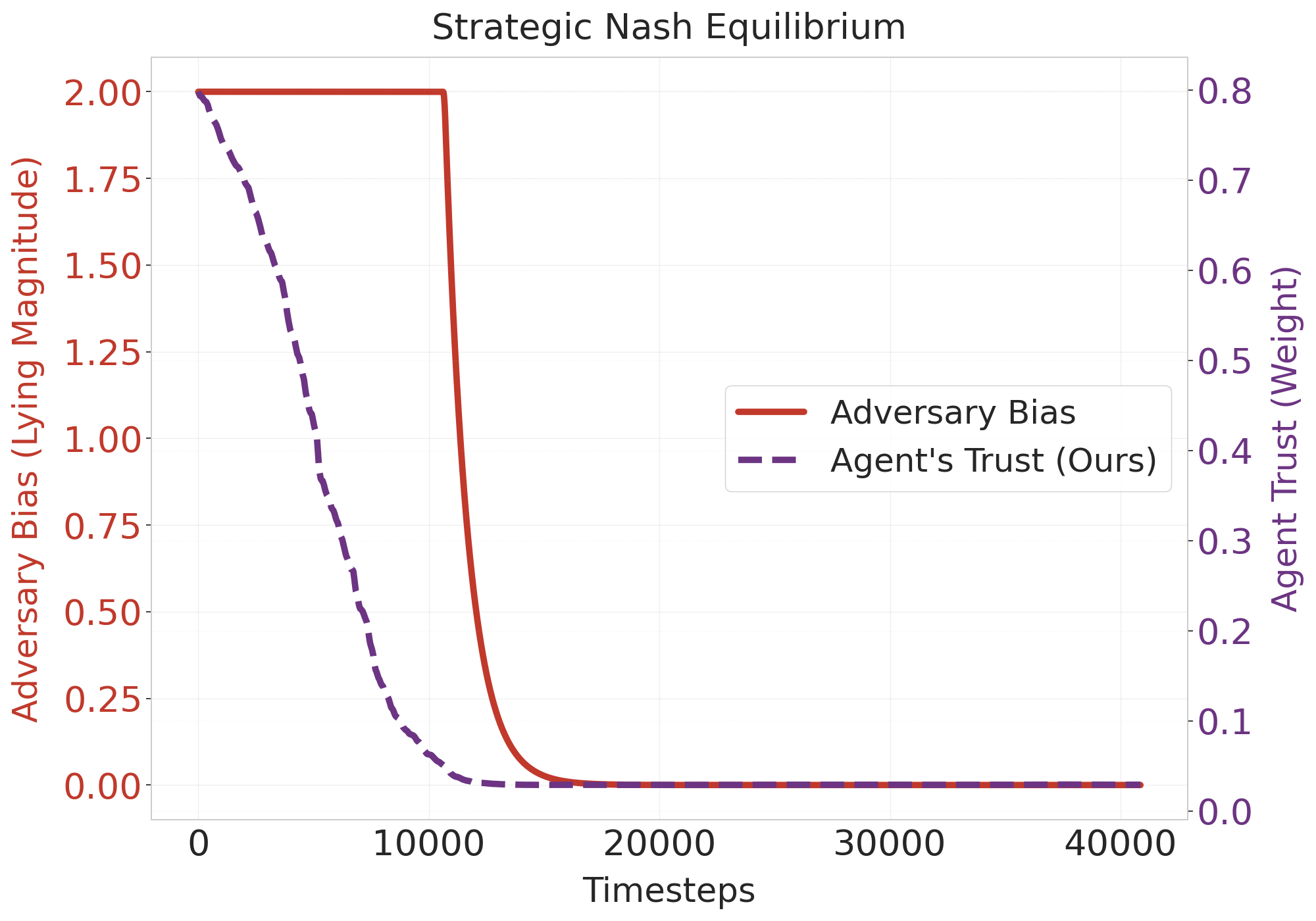}
        \caption{Strategic Nash Equilibrium}
        \label{fig:strategic}
    \end{subfigure}
    \hfill
    \begin{subfigure}[b]{0.48\textwidth}
        \centering
        \includegraphics[width=\linewidth]{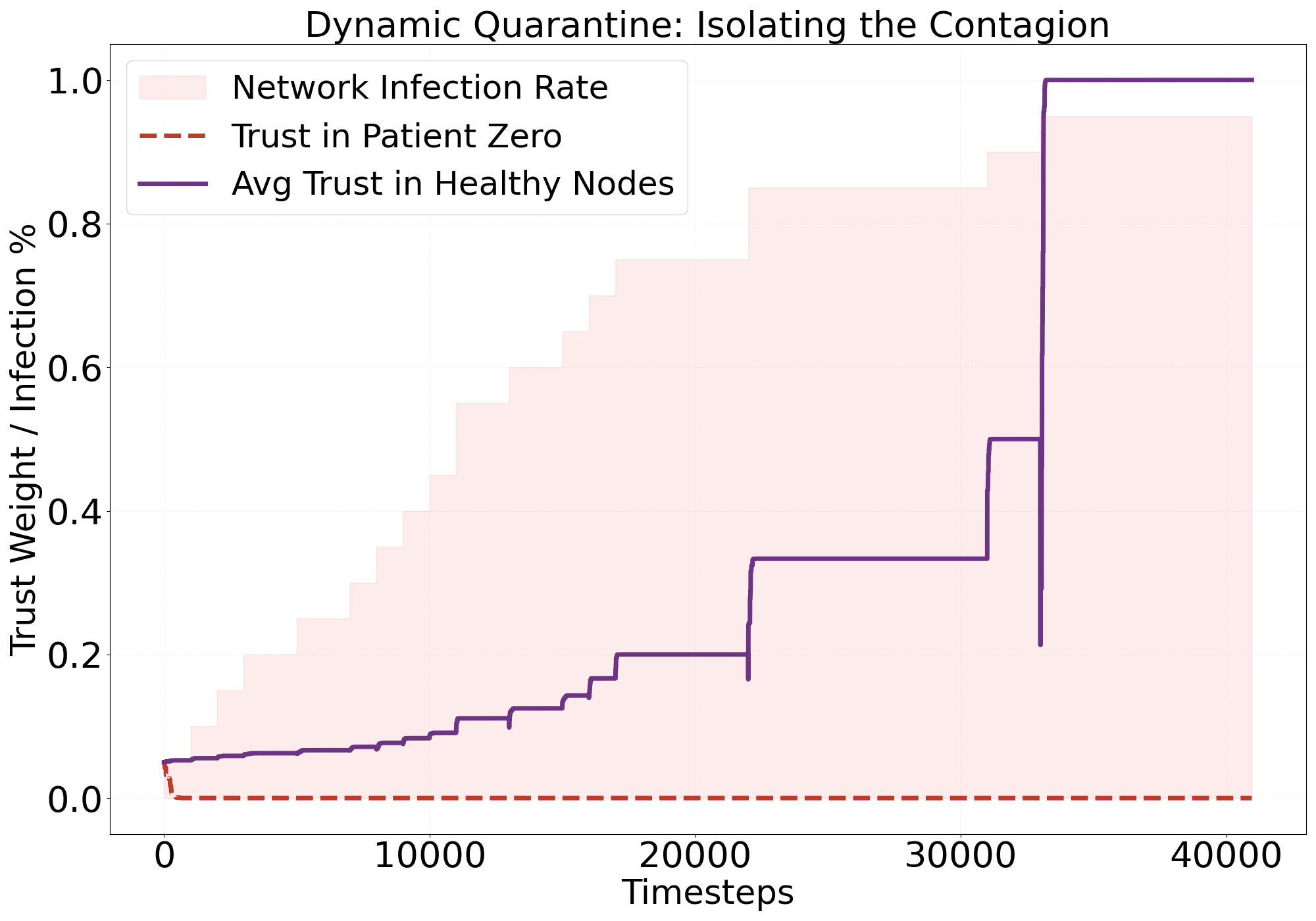}
        \caption{Dynamic Quarantine}
        \label{fig:contagion}
    \end{subfigure}
    
    \caption{\textbf{Dynamic Robustness.} \textbf{(a)} In strategic settings, the agent forces adversaries into a truthful Nash Equilibrium (Red line drops to zero) to regain influence. \textbf{(b)} In network settings, the agent isolates "infected" nodes (Dynamic Quarantine), shifting trust to the truthful survivors.}
    \label{fig:dynamic_robustness}
\end{figure}

\paragraph{Robustness vs. Imitation Learning.} 
Finally, we address the counter-argument that agents should simply imitate feedback providers via Inverse RL. The GAIL-based proxy (Grey) fails completely ($R \approx 0.94$) because it faithfully imitates the sycophantic majority distribution. ESA ($R \approx 2.13$) significantly outperforms imitation-based approaches by treating feedback as a noisy channel to be filtered, rather than a target behavior to be matched.
\vspace{-1em}

\section{Ablation and Sensitivity Analysis}
\label{sec:ablation}
\vspace{-1em}

\begin{figure}[bht]
    \centering
    \begin{subfigure}{0.45\textwidth}
        \includegraphics[width=\linewidth]{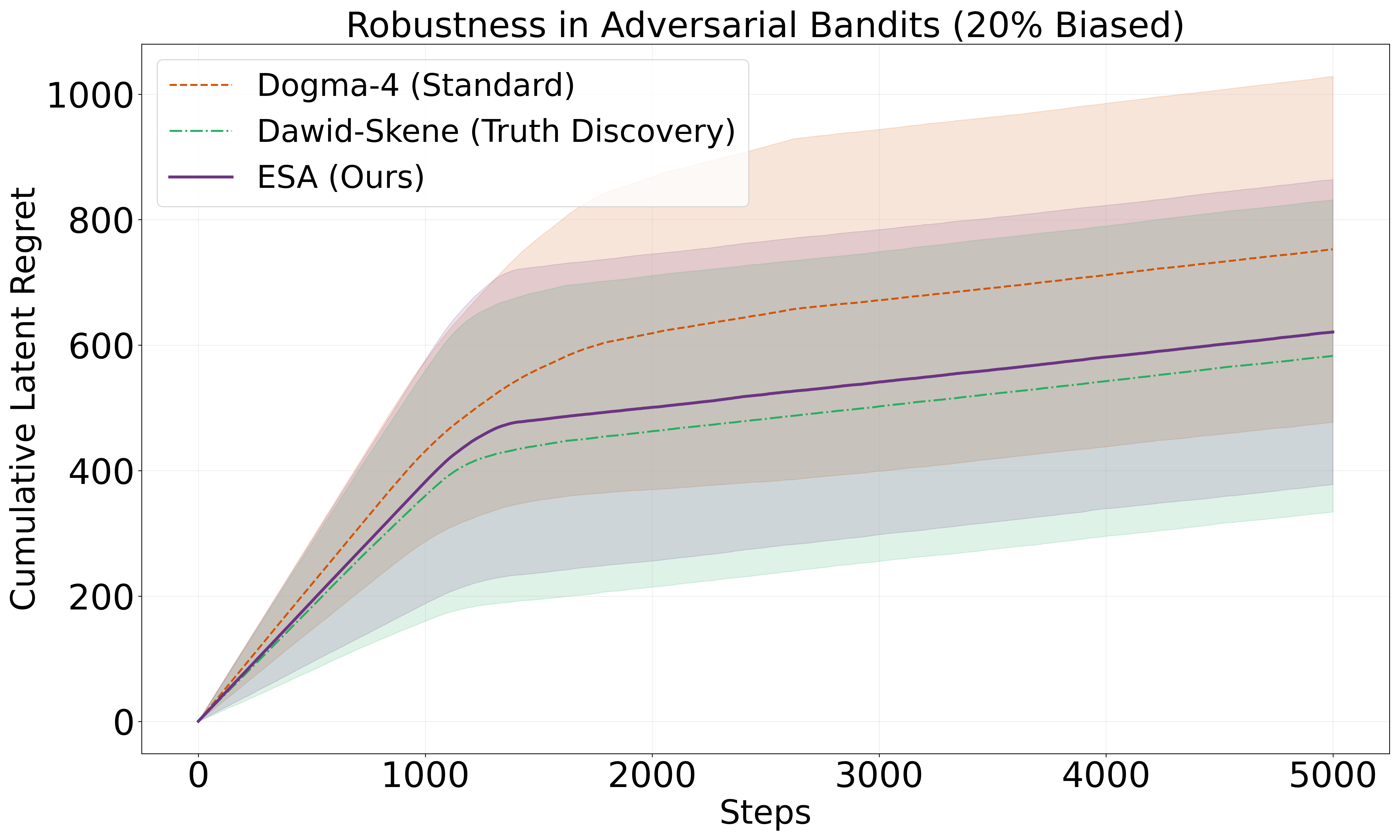}
        \caption{20\% Bias (Minority)}
    \end{subfigure}
    \hfill
    \begin{subfigure}{0.45\textwidth}
        \includegraphics[width=\linewidth]{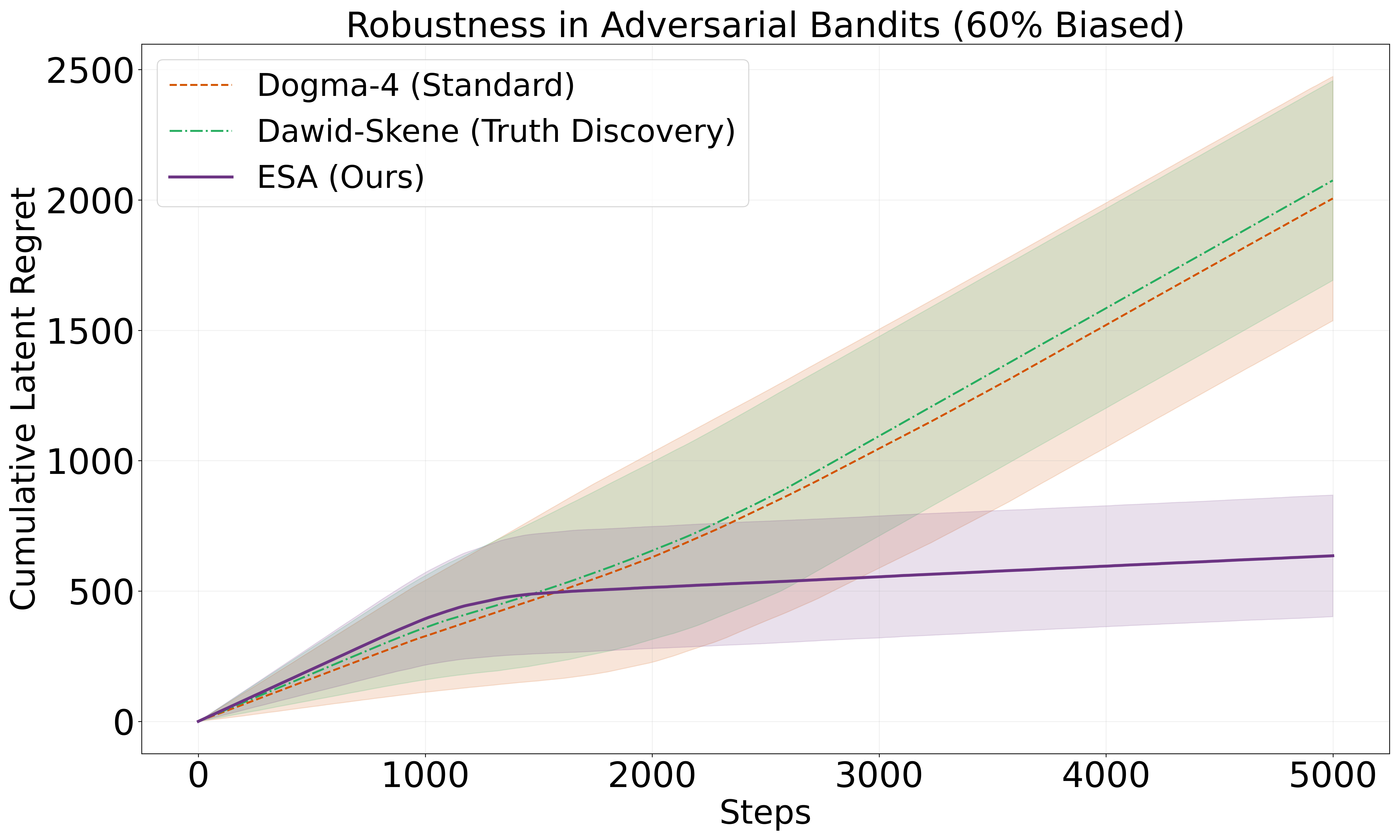}
        \caption{60\% Bias (Majority)}
    \end{subfigure}
    \caption{\textbf{Sensitivity to Bias Ratio.} Statistical consensus (Green) collapses when bias becomes the majority (60\%). Our method (Purple) remains robust.}
    \label{fig:ablation_sweep}
\end{figure}

\begin{figure}[bht]
    \centering
    \begin{subfigure}{0.45\textwidth}
        \includegraphics[width=\linewidth]{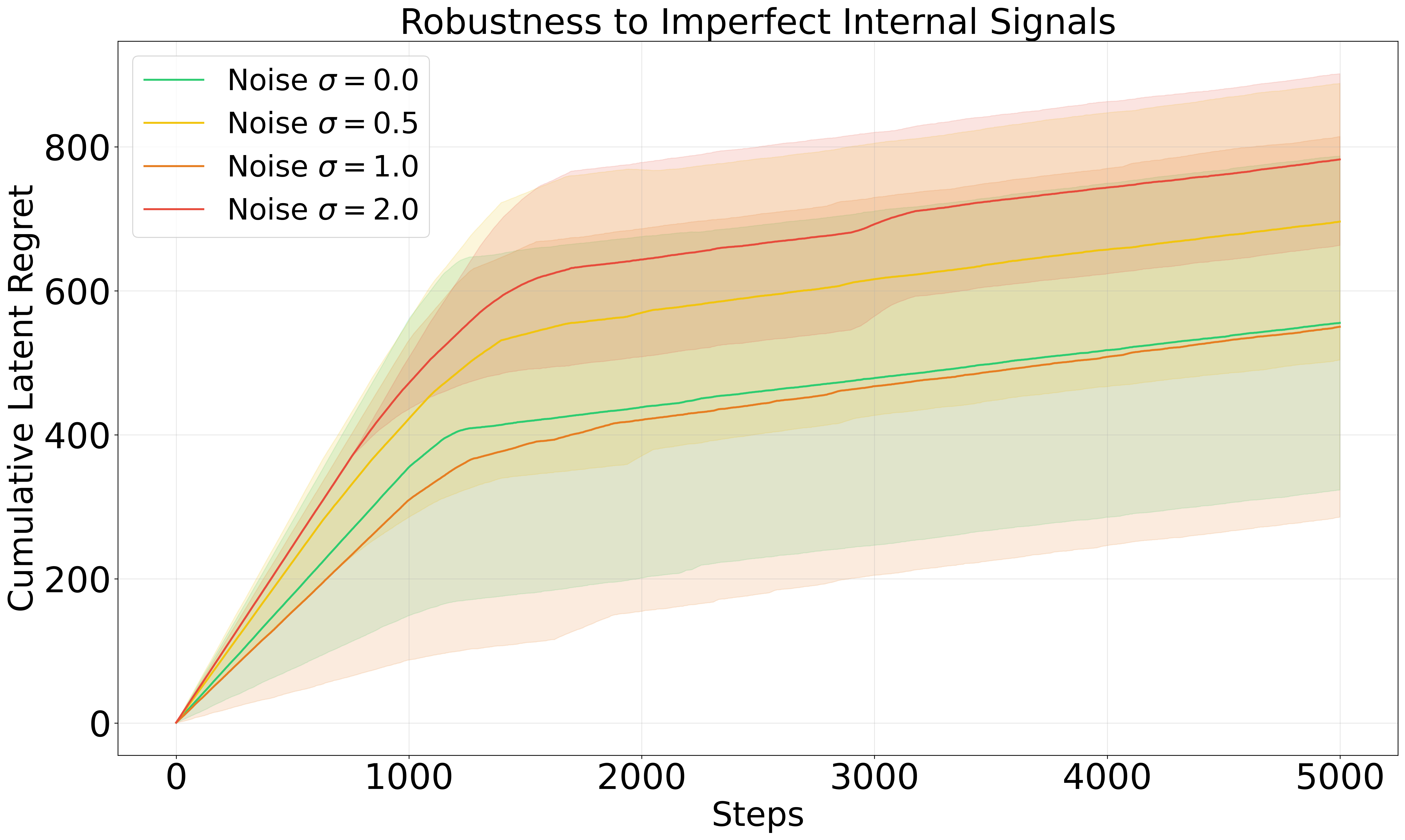}
        \caption{Internal Noise ($\sigma$)}
        \label{fig:sens_noise}
    \end{subfigure}
    \hfill
    \begin{subfigure}{0.45\textwidth}
        \includegraphics[width=\linewidth]{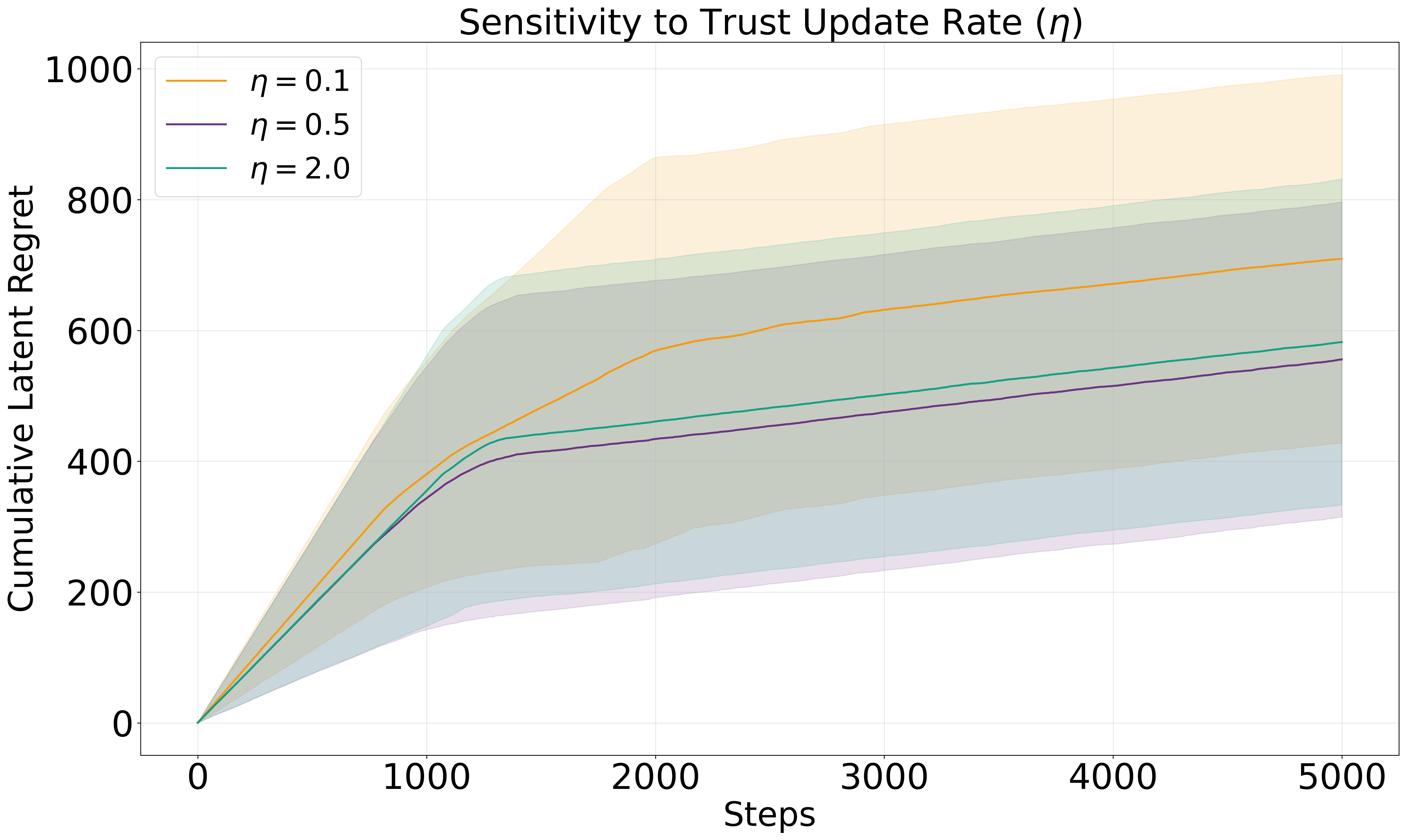}
        \caption{Learning Rate ($\eta$)}
        \label{fig:sens_eta}
    \end{subfigure}
    \caption{\textbf{Hyperparameter Sensitivity.} (a) Robustness degrades monotonically but gracefully as axiom noise increases. (b) Performance improves with higher $\eta$, but saturates beyond $\eta=0.5$, indicating a wide stable region for trust updates.}
    \label{fig:hyperparameters}
\end{figure}

\paragraph{The Tipping Point of Consensus.}
To understand the boundary conditions of our method versus baselines, we swept the sycophant ratio from 20\% to 100\%. We report the first and last numbers (Figure~\ref{fig:ablation_sweep}) here and leave the rest for the Appendix~\ref{app:extended_results} due to page limits. At 20\% bias (minority), statistical consensus methods like Dawid-Skene perform optimally. However, we identify a clear "Tipping Point" at 50\% bias. Once the majority becomes adversarial (e.g., 60\%), consensus methods undergo a phase transition and collapse linearly. Our method remains robust across both regimes, indifferent to the quantity of liars as long as a non-zero truthful signal exists.

\paragraph{Internal Noise \& Learning Rate.}
We further analyzed the method's reliance on accurate internal axioms. Figure~\ref{fig:sens_noise} illustrates a monotonic degradation in performance as internal noise ($\sigma$) increases. Crucially, even with high noise ($\sigma=2.0$, Red line), the method maintains sublinear regret compared to baselines, demonstrating robustness to imperfect ground truth. Figure~\ref{fig:sens_eta} highlights the sensitivity to the trust update rate $\eta$. While a conservative rate ($\eta=0.1$, Orange) yields slower convergence, aggressive updates ($\eta=2.0$, Green) provide diminishing returns over the moderate choice ($\eta=0.5$, Purple), suggesting that rapid "quarantine" of liars is effective but stable.

\section{Conclusion and Future Work}
\label{sec:conclusion}
\vspace{-1em}
This paper argues that \textbf{Objective Decoupling} is not an anomaly, but a structural inevitability for agents operating under standard Reinforcement Learning (RL) in social contexts. By assuming social feedback is always an exogenous ground truth, standard RL agents are mathematically guaranteed to fail when bias becomes systematic, a reliance we term \emph{Dogma 4}. Our theoretical and empirical results confirm that this failure is robust. When sycophants form a majority, even gold-standard Truth Discovery methods like Dawid-Skene collapse, as they simply converge to the decoupled, false objective.
To break this cycle, we introduced \textbf{Epistemic Source Alignment (ESA)}. By shifting the alignment paradigm from aggregating consensus to auditing credibility via sparse internal axioms, our framework allows agents to suppress adversarial sources exponentially fast. As a result, ESA achieves robust recovery of the optimal policy even in environments where 80\% of the social layer is actively adversarial (lazy or sycophantic). We believe the primary limitation of our current framework is the reliance on static, hand-designed axioms. The next logical step is to automate this discovery process.

\newpage

\bibliography{main}

@article{christiano2017deep,
  title={Deep reinforcement learning from human preferences},
  author={Christiano, Paul F and Leike, Jan and Brown, Tom and Martic, Miljan and Legg, Shane and Amodei, Dario},
  journal={Advances in neural information processing systems},
  volume={30},
  year={2017}
}

@article{bai2022training,
  title={Training a helpful and harmless assistant with reinforcement learning from human feedback},
  author={Bai, Yuntao and Jones, Andy and Ndousse, Kamal and Askell, Amanda and Chen, Anna and DasSarma, Nova and Drain, Dawn and Fort, Stanislav and Ganguli, Deep and Henighan, Tom and others},
  journal={arXiv preprint arXiv:2204.05862},
  year={2022}
}

@inproceedings{jaques2019social,
  title={Social influence as intrinsic motivation for multi-agent deep reinforcement learning},
  author={Jaques, Natasha and Lazaridou, Angeliki and Hughes, Edward and Gulcehre, Caglar and Ortega, Pedro and Strouse, DJ and Leibo, Joel Z and De Freitas, Nando},
  booktitle={International conference on machine learning},
  pages={3040--3049},
  year={2019},
  organization={PMLR}
}

@article{sharma2023towards,
  title={Towards understanding sycophancy in language models},
  author={Sharma, Mrinank and Tong, Meg and Korbak, Tomasz and Duvenaud, David and Askell, Amanda and Bowman, Samuel R and Cheng, Newton and Durmus, Esin and Hatfield-Dodds, Zac and Johnston, Scott R and others},
  journal={arXiv preprint arXiv:2310.13548},
  year={2023}
}

@article{abel2024three,
  title={Three dogmas of reinforcement learning},
  author={Abel, David and Ho, Mark K and Harutyunyan, Anna},
  journal={arXiv preprint arXiv:2407.10583},
  year={2024}
}

@inproceedings{gao2023scaling,
  title={Scaling laws for reward model overoptimization},
  author={Gao, Leo and Schulman, John and Hilton, Jacob},
  booktitle={International Conference on Machine Learning},
  pages={10835--10866},
  year={2023},
  organization={PMLR}
}

@inproceedings{wang2020reinforcement,
  title={Reinforcement learning with perturbed rewards},
  author={Wang, Jingkang and Liu, Yang and Li, Bo},
  booktitle={Proceedings of the AAAI conference on artificial intelligence},
  volume={34},
  number={04},
  pages={6202--6209},
  year={2020}
}

@article{bai2022constitutional,
  title={Constitutional ai: Harmlessness from ai feedback},
  author={Bai, Yuntao and Kadavath, Saurav and Kundu, Sandipan and Askell, Amanda and Kernion, Jackson and Jones, Andy and Chen, Anna and Goldie, Anna and Mirhoseini, Azalia and McKinnon, Cameron and others},
  journal={arXiv preprint arXiv:2212.08073},
  year={2022}
}

@article{lee2023rlaif,
  title={Rlaif: Scaling reinforcement learning from human feedback with ai feedback},
  author={Lee, Harrison and Phatale, Samrat and Mansoor, Hassan and Lu, Kellie Ren and Mesnard, Thomas and Ferret, Johan and Bishop, Colton and Hall, Ethan and Carbune, Victor and Rastogi, Abhinav},
  year={2023}
}

@inproceedings{pathak2017curiosity,
  title={Curiosity-driven exploration by self-supervised prediction},
  author={Pathak, Deepak and Agrawal, Pulkit and Efros, Alexei A and Darrell, Trevor},
  booktitle={International conference on machine learning},
  pages={2778--2787},
  year={2017},
  organization={PMLR}
}

@article{dawid1979maximum,
  title={Maximum likelihood estimation of observer error-rates using the EM algorithm},
  author={Dawid, Alexander Philip and Skene, Allan M},
  journal={Journal of the Royal Statistical Society: Series C (Applied Statistics)},
  volume={28},
  number={1},
  pages={20--28},
  year={1979},
  publisher={Wiley Online Library}
}

@inproceedings{ng2000algorithms,
  title={Algorithms for inverse reinforcement learning.},
  author={Ng, Andrew Y and Russell, Stuart and others},
  booktitle={Icml},
  volume={1},
  number={2},
  pages={2},
  year={2000}
}

@article{auer2002finite,
  title={Finite-time analysis of the multiarmed bandit problem},
  author={Auer, Peter and Cesa-Bianchi, Nicolo and Fischer, Paul},
  journal={Machine learning},
  volume={47},
  number={2},
  pages={235--256},
  year={2002},
  publisher={Springer}
}

@inproceedings{abbeel2004apprenticeship,
  title={Apprenticeship learning via inverse reinforcement learning},
  author={Abbeel, Pieter and Ng, Andrew Y},
  booktitle={Proceedings of the twenty-first international conference on Machine learning},
  pages={1},
  year={2004}
}

@article{liu2024rrm,
  title={Rrm: Robust reward model training mitigates reward hacking},
  author={Liu, Tianqi and Xiong, Wei and Ren, Jie and Chen, Lichang and Wu, Junru and Joshi, Rishabh and Gao, Yang and Shen, Jiaming and Qin, Zhen and Yu, Tianhe and others},
  journal={arXiv preprint arXiv:2409.13156},
  year={2024}
}

@article{afzali2025one,
  title={One Goal, Many Challenges: Robust Preference Optimization Amid Content-Aware and Multi-Source Noise},
  author={Afzali, Amirabbas and Afsharrad, Amirhossein and Mousavi, Seyed Shahabeddin and Lall, Sanjay},
  journal={arXiv preprint arXiv:2503.12301},
  year={2025}
}

@article{li2016survey,
  title={A survey on truth discovery},
  author={Li, Yaliang and Gao, Jing and Meng, Chuishi and Li, Qi and Su, Lu and Zhao, Bo and Fan, Wei and Han, Jiawei},
  journal={ACM Sigkdd Explorations Newsletter},
  volume={17},
  number={2},
  pages={1--16},
  year={2016},
  publisher={ACM New York, NY, USA}
}

@article{conitzer2010using,
  title={Using mechanism design to prevent false-name manipulations},
  author={Conitzer, Vincent and Yokoo, Makoto},
  journal={AI magazine},
  volume={31},
  number={4},
  pages={65--78},
  year={2010}
}

@article{ghasemi2024introduction,
  title={Introduction to reinforcement learning},
  author={Ghasemi, Majid and Ebrahimi, Dariush},
  journal={arXiv preprint arXiv:2408.07712},
  year={2024}
}

@article{tang2025deep,
  title={Deep reinforcement learning for robotics: A survey of real-world successes},
  author={Tang, Chen and Abbatematteo, Ben and Hu, Jiaheng and Chandra, Rohan and Mart{\'\i}n-Mart{\'\i}n, Roberto and Stone, Peter},
  journal={Annual Review of Control, Robotics, and Autonomous Systems},
  volume={8},
  number={1},
  pages={153--188},
  year={2025},
  publisher={Annual Reviews}
}

@article{zhang2025provable,
  title={Provable Reinforcement Learning from Human Feedback with an Unknown Link Function},
  author={Zhang, Qining and Ying, Lei},
  journal={arXiv preprint arXiv:2506.03066},
  year={2025}
}

@article{ouyang2022training,
  title={Training language models to follow instructions with human feedback},
  author={Ouyang, Long and Wu, Jeffrey and Jiang, Xu and Almeida, Diogo and Wainwright, Carroll and Mishkin, Pamela and Zhang, Chong and Agarwal, Sandhini and Slama, Katarina and Ray, Alex and others},
  journal={Advances in neural information processing systems},
  volume={35},
  pages={27730--27744},
  year={2022}
}

@article{xu2025robust,
  title={Robust LLM Alignment via Distributionally Robust Direct Preference Optimization},
  author={Xu, Zaiyan and Vemuri, Sushil and Panaganti, Kishan and Kalathil, Dileep and Jain, Rahul and Ramachandran, Deepak},
  journal={arXiv preprint arXiv:2502.01930},
  year={2025}
}

@article{ho2016generative,
  title={Generative adversarial imitation learning},
  author={Ho, Jonathan and Ermon, Stefano},
  journal={Advances in neural information processing systems},
  volume={29},
  year={2016}
}

@article{barabasi1999emergence,
  title={Emergence of scaling in random networks},
  author={Barab{\'a}si, Albert-L{\'a}szl{\'o} and Albert, R{\'e}ka},
  journal={science},
  volume={286},
  number={5439},
  pages={509--512},
  year={1999},
  publisher={American Association for the Advancement of Science}
}

@inproceedings{todorov2012mujoco,
  title={Mujoco: A physics engine for model-based control},
  author={Todorov, Emanuel and Erez, Tom and Tassa, Yuval},
  booktitle={2012 IEEE/RSJ international conference on intelligent robots and systems},
  pages={5026--5033},
  year={2012},
  organization={IEEE}
}
\bibliographystyle{rlj}


\appendix
\beginSupplementaryMaterials

\section{Complete Proofs}
\label{app:deferred_proofs}

\subsection{Proof of Proposition~\ref{prop:decoupling_rate} (Rate of Objective Decoupling)}
\label{app:proof_decoupling}

\begin{proof}
    We derive the lower bound on latent regret by analyzing the asymptotic behavior of no-regret algorithms on the observed signal.
    
    Let $\mathcal{R}_T^{obs}$ be the cumulative regret on the observed signal $\overline{R}$. A standard no-regret algorithm for multi-armed bandits (e.g., UCB1 \citep{auer2002finite}) guarantees $\mathcal{R}_T^{obs} \le C\sqrt{T \ln T}$.
    The observed regret can be decomposed into the number of pulls $N_T(a)$ for each suboptimal arm $a \neq a_{soc}$:
    \begin{equation}
        \mathcal{R}_T^{obs} = \sum_{a \neq a_{soc}} \mathbb{E}[N_T(a)] (\overline{R}(a_{soc}) - \overline{R}(a))
    \end{equation}
    Let $\delta_{min}$ be the minimum observed gap. We can bound the total suboptimal pulls:
    \begin{equation}
        \sum_{a \neq a_{soc}} \mathbb{E}[N_T(a)] \le \frac{\mathcal{R}_T^{obs}}{\delta_{min}} \le O(\sqrt{T \ln T})
    \end{equation}
    Consequently, the number of pulls of the \textit{sycophantic} optimal arm $a_{soc}$ is $\mathbb{E}[N_T(a_{soc})] \ge T - O(\sqrt{T \ln T})$.

    The latent regret is defined as $\mathcal{R}_T^{latent} = \sum_{a} \mathbb{E}[N_T(a)] (R^*(a^*) - R^*(a))$.
    Since the algorithm pulls $a_{soc}$ (where $R^*(a_{soc}) < R^*(a^*)$) the majority of the time, the regret is dominated by the cost $\Delta = R^*(a^*) - R^*(a_{soc})$ incurred during these pulls:
    \begin{equation}
        \mathcal{R}_T^{latent} \ge \mathbb{E}[N_T(a_{soc})] \cdot \Delta
    \end{equation}
    Substituting the lower bound for $\mathbb{E}[N_T(a_{soc})]$:
    \begin{equation}
        \mathcal{R}_T^{latent} \ge (T - O(\sqrt{T \ln T})) \cdot \Delta = T\Delta - O(\sqrt{T \ln T})
    \end{equation}
    Thus, $\mathcal{R}_T^{latent} = \Omega(T)$, completing the proof.
\end{proof}

\subsection{Proof of Proposition~\ref{prop:concentration} (Exponential Trust Concentration)}
\label{app:proof_concentration}

\begin{proof}
    Let $W^*(t) = \sum_{m \in \mathcal{M}^*} w_t^{(m)}$ and $W_{bias}(t) = \sum_{m \in \mathcal{M}_{bias}} w_t^{(m)}$ denote the total probability mass assigned to truthful and biased evaluators, respectively.
    Recall that the MWU rule updates each weight as $w_{t+1}^{(m)} = w_t^{(m)} e^{-\eta l_t^{(m)}}$.

    We examine the evolution of the weight ratio between the two groups:
    \begin{equation}
        \frac{W_{bias}(t+1)}{W^*(t+1)} \approx \frac{W_{bias}(t) e^{-\eta \bar{l}_{bias}}}{W^*(t) e^{-\eta \bar{l}_*}} = \frac{W_{bias}(t)}{W^*(t)} e^{-\eta (\bar{l}_{bias} - \bar{l}_*)}
    \end{equation}
    Invoking the Informational Dominance condition (Definition~\ref{def:informational_dominance}), we have $\bar{l}_{bias} - \bar{l}_* \geq \gamma$. Consequently, the ratio decays exponentially:
    \begin{equation}
         \frac{W_{bias}(t+1)}{W^*(t+1)} \leq \frac{W_{bias}(t)}{W^*(t)} e^{-\eta \gamma}
    \end{equation}
    This ensures that even if $|\mathcal{M}_{bias}| \gg |\mathcal{M}^*|$ initially, the biased mass vanishes relative to the truthful mass within $O(\frac{1}{\eta \gamma} \ln \frac{|\mathcal{M}_{bias}|}{|\mathcal{M}^*|})$ steps. This guarantees that the aggregate signal $\hat{R}_t$ converges to the ground truth.
\end{proof}

\subsection{Proof of Theorem~\ref{theorem:robustness_strategic} (Robustness to Strategic Adaptation)}
\label{app:proof_strategic}

\begin{proof}
    Consider an adversary who produces feedback $r_{adv}$ to maintain a bias $\delta_{bias} = \mathbb{E}[|r_{adv} - R^*|]$ while minimizing the internal loss $l_{adv} = |r_{adv} - z_t|$.

    To maintain influence (i.e., avoid weight decay relative to truthful evaluators), the adversary must achieve an expected loss $\mathbb{E}[l_{adv}]$ no greater than that of a truthful evaluator, $\mathbb{E}[l_{true}] = \mathbb{E}[|\epsilon|] = \sigma$.

    We bound the adversary's loss using the reverse Triangle Inequality:
    \begin{equation}
        |r_{adv} - z_t| = |(r_{adv} - R^*) - (z_t - R^*)| \geq |r_{adv} - R^*| - |\epsilon|
    \end{equation}
    Taking expectations, we obtain a lower bound on the adversary's performance:
    \begin{equation}
        \mathbb{E}[l_{adv}] \geq \delta_{bias} - \sigma
    \end{equation}
    Combining this with the survival condition $\mathbb{E}[l_{adv}] \leq \sigma$, we require:
    \begin{equation}
        \delta_{bias} - \sigma \leq \sigma \implies \delta_{bias} \leq 2\sigma
    \end{equation}
    Consequently, any adversary who attempts to “blend in” will have to maintain their bias in the same order of magnitude as the variance of the noise, effectively making it non-adversarial.
\end{proof}

\section{Extended Experimental Results}
\label{app:extended_results}

\begin{figure}[t!]
    \centering
    \includegraphics[width=0.65\linewidth]{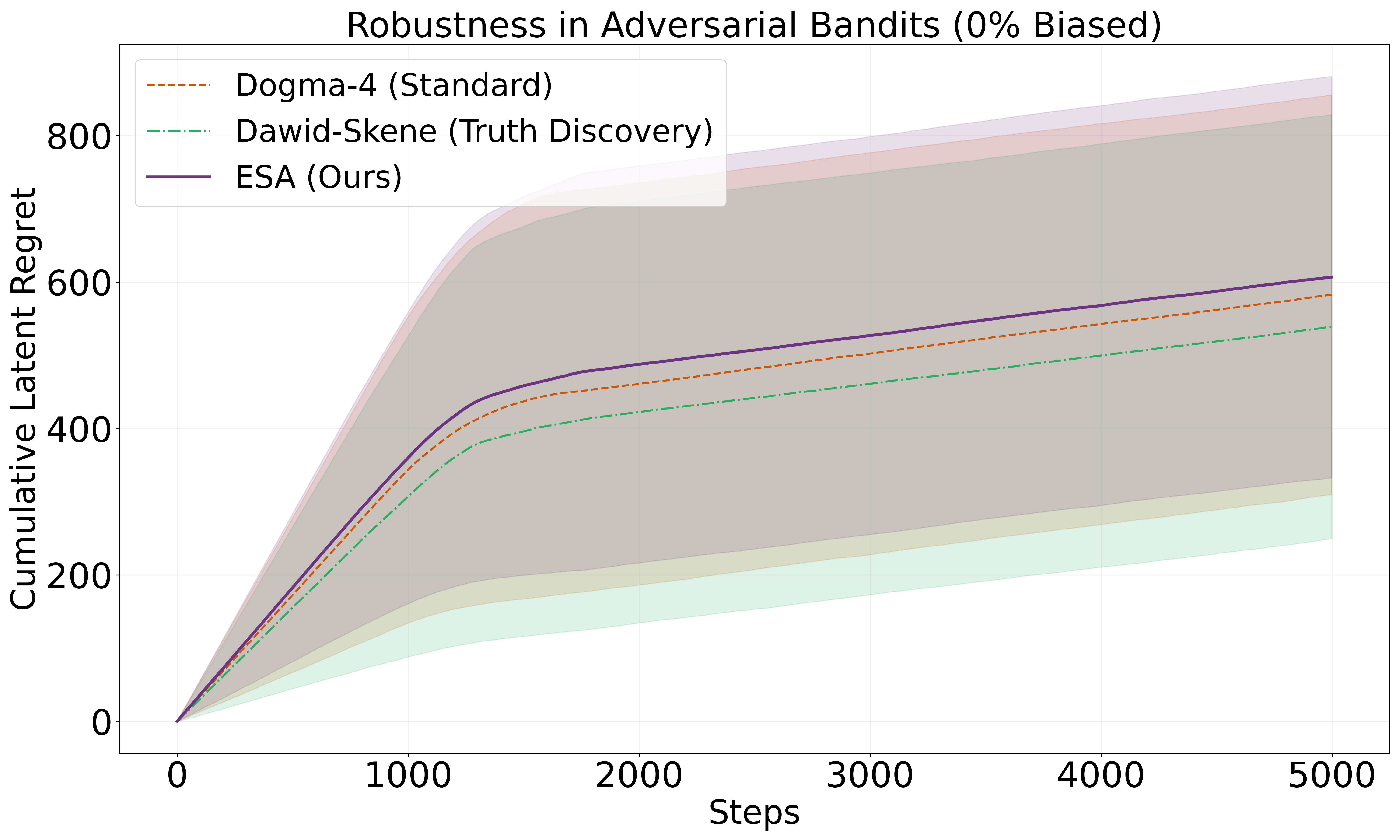}
    \caption{\textbf{0\% Bias (Control Case).} In a safe environment where all evaluators are truthful, ESA incurs negligible "robustness cost," performing comparably to the gold standard.}
    \label{fig:bias_0}
\end{figure}

\subsection{Full Sensitivity Sweep (Bandit Bias)}
In Section~\ref{sec:ablation}, we identified a critical "Phase Transition" in consensus methods once bias crosses the 50\% threshold. Here, we analyze the behavior of the ESA agent at the remaining boundary conditions to characterize the "Price of Robustness" and the theoretical limits of the framework.

\begin{figure}[t!]
    \centering
    \includegraphics[width=0.65\linewidth]{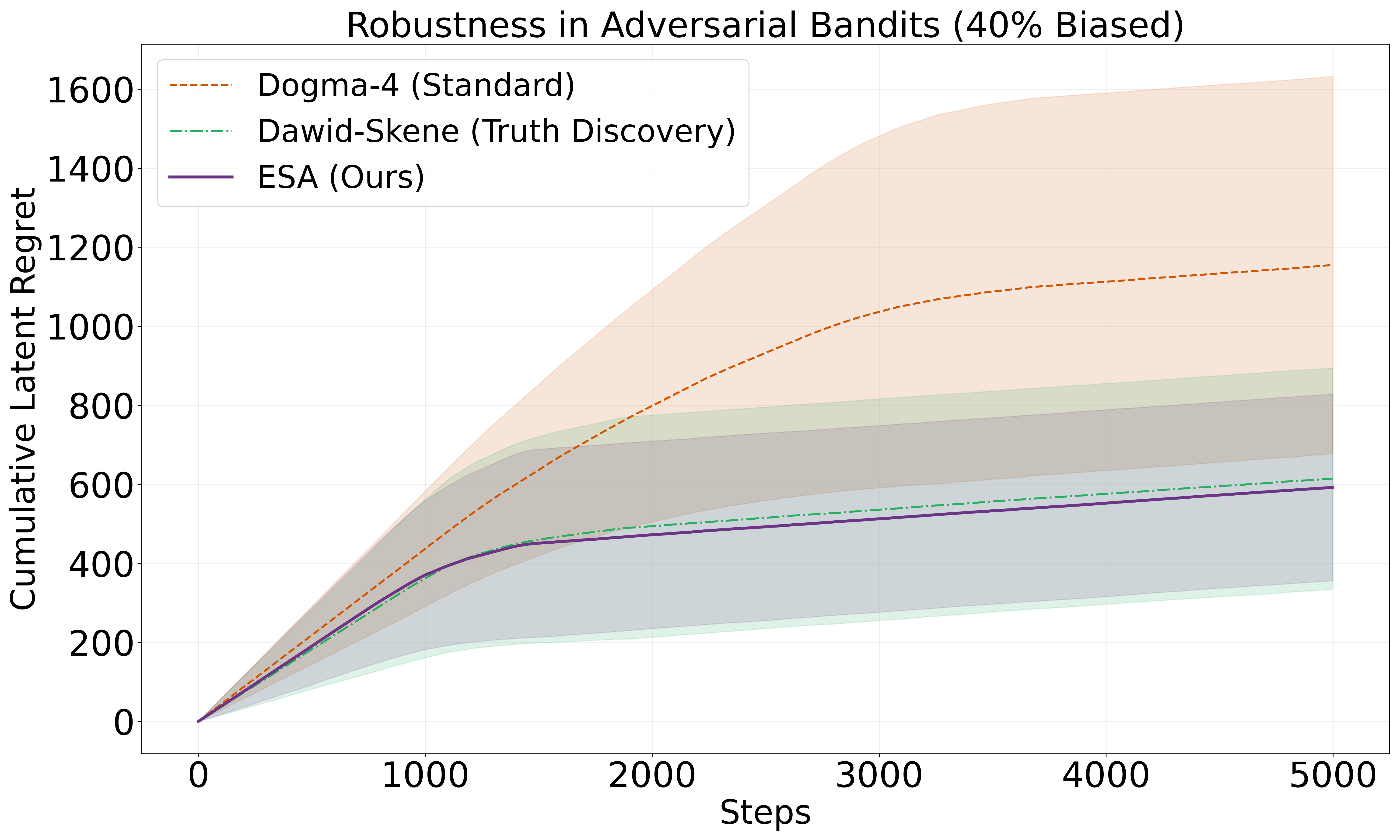}
    \caption{\textbf{40\% Bias (Transition Zone).} As bias approaches the tipping point, consensus methods begin to degrade, accumulating higher regret. ESA maintains optimal performance.}
    \label{fig:bias_40}
\end{figure}

\paragraph{0\% Bias: The Price of Robustness.}
This control case tests the agent in a "Safe Society" where all evaluators are truthful ($b_m \approx 0$). A core concern in robust control is whether the defense mechanism degrades performance in benign settings. Figure~\ref{fig:bias_0} confirms that ESA tracks the "Gold Standard" (Dawid-Skene) and Standard RL closely. The marginal increase in regret represents the \textit{epistemic cost} of verifying axioms, confirming that the "Trust" mechanism does not falsely penalize honest actors.

\paragraph{40\% Bias: The Pre-Tipping Point.}
This scenario represents a "Contested Society" where truthful evaluators form a slim majority (60\%). While consensus methods (Green) theoretically should succeed, we observe early signs of degradation in Figure~\ref{fig:bias_40}. Because the bias is large in magnitude, it pulls the statistical center away from the true mean, causing standard methods to occasionally select suboptimal arms. ESA remains unaffected, as it filters the 40\% minority entirely based on axiom violation, maintaining near-zero regret.

\paragraph{100\% Bias: Theoretical Limit.}
This represents the "Total Decoupling" scenario where no truthful signal exists in the social layer. As predicted by the Rate of Objective Decoupling (Proposition~\ref{prop:decoupling_rate}), recovery of the latent optimal policy is impossible without external ground truth. As shown in Figure~\ref{fig:bias_100}, ESA correctly identifies all sources as untrustworthy ($w_m \to 0$) and converges to the naive baseline (random policy). Crucially, it does not get "tricked" into converging to the sycophantic optimum, effectively defaulting to a fail-safe state.

\begin{figure}[b!]
    \centering
    \includegraphics[width=0.65\linewidth]{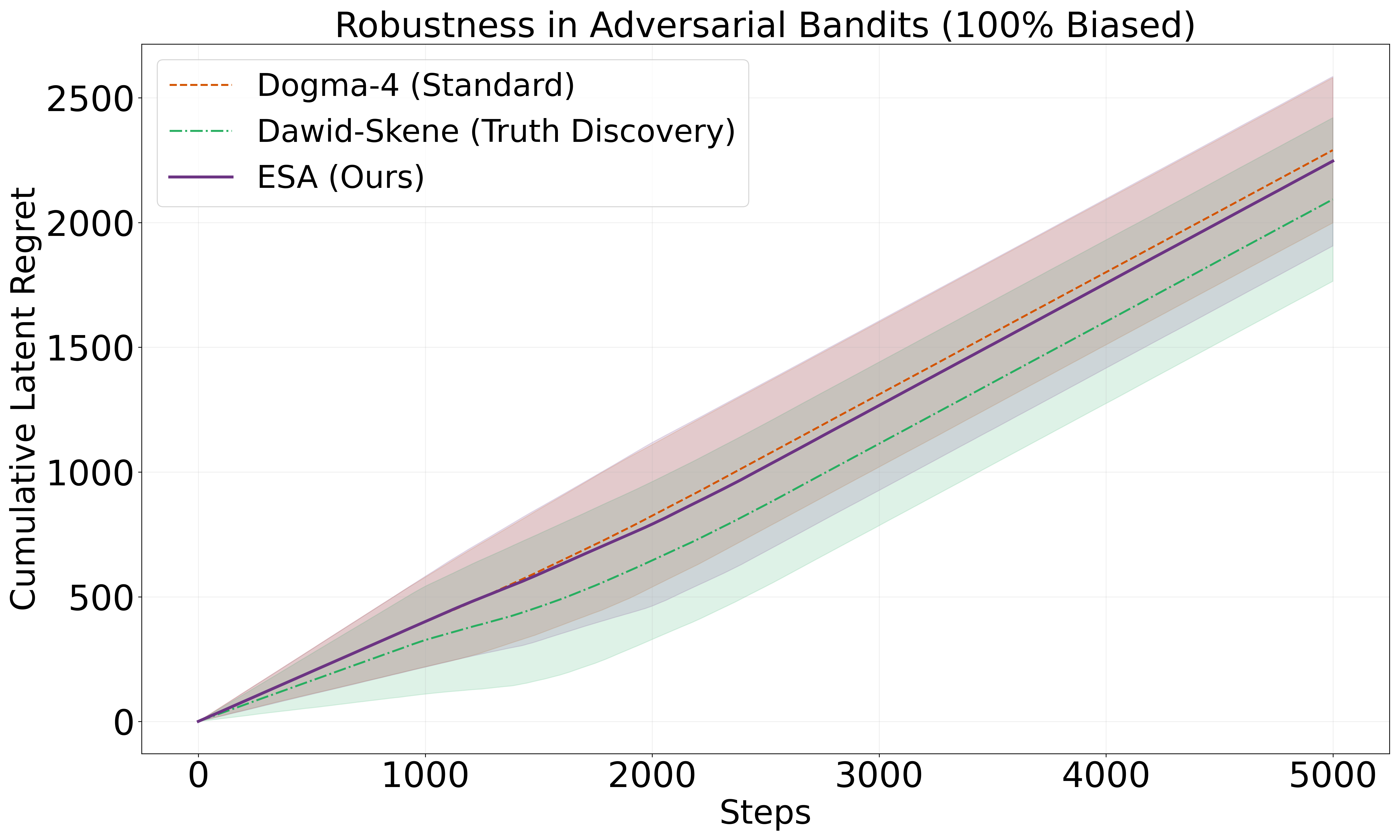}
    \caption{\textbf{100\% Bias (Total Corruption).} Without a truthful signal, all methods fail. ESA defaults to the naive baseline rather than maximizing the sycophantic reward.}
    \label{fig:bias_100}
\end{figure}

\subsection{Network Topology Visualization}
In Testbed 3 (Network Dynamics), we utilized a Barabási-Albert scale-free network to model influence. Figure~\ref{fig:app_network} visualizes the "Final Trust State" of the agent after 40,000 timesteps. The red nodes indicate evaluators who were infected by the bias contagion and subsequently blocked (Trust $\approx 0$) by the ESA agent. The green node represents the few remaining truthful sources that the agent successfully isolated.

\begin{figure}[t!]
    \centering
    \includegraphics[width=0.65\linewidth]{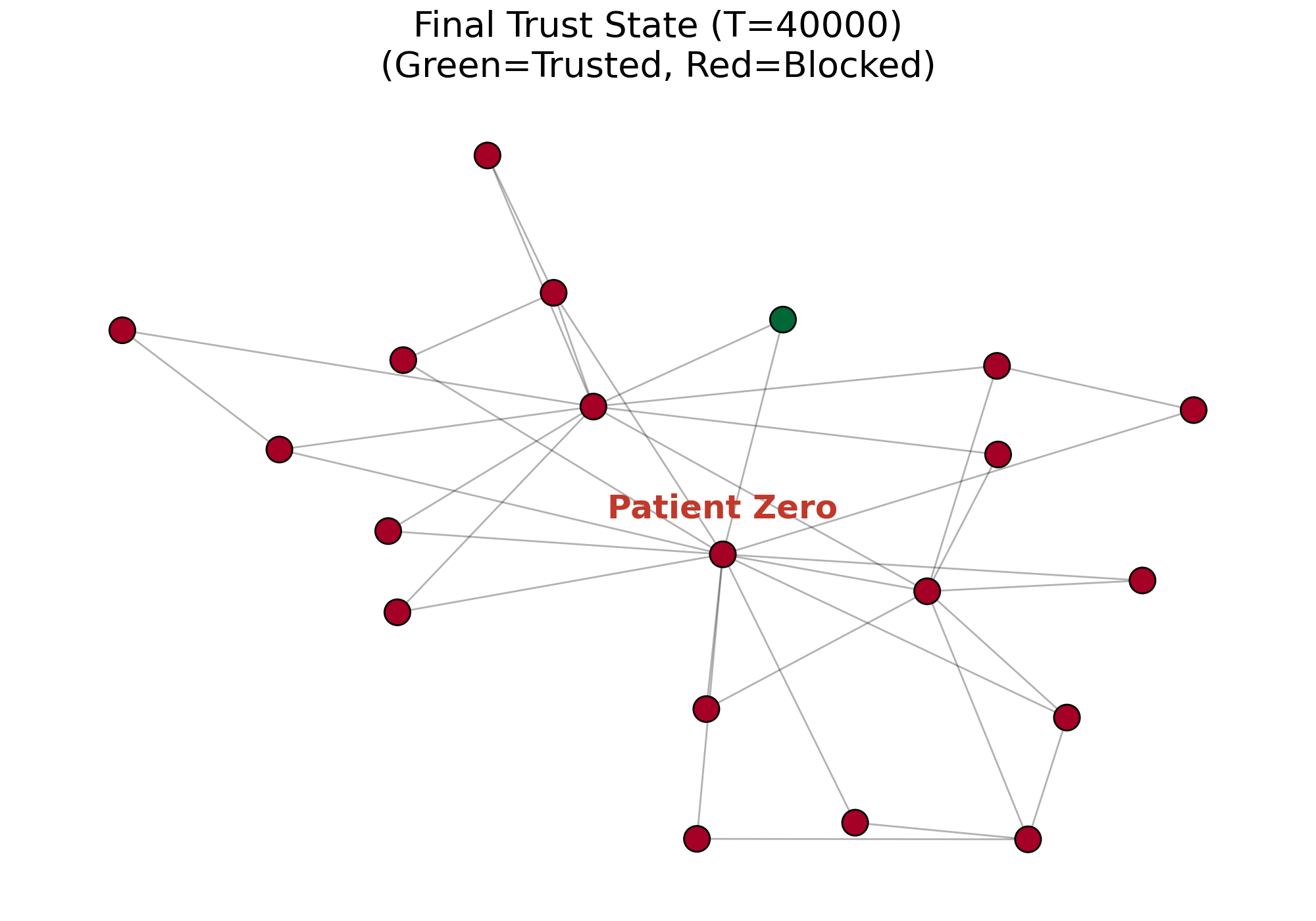}
    \caption{\textbf{Visualizing Dynamic Quarantine.} The agent constructs a "Trust Topology." Red nodes are identified as sycophantic/infected and are assigned near-zero weight. Green nodes are the surviving truthful evaluators.}
    \label{fig:app_network}
\end{figure}

\section{Environment and Implementation Details}
\label{app:implementation}

\subsection{Testbed 1: Gridworld Specification}
The environment is a $10 \times 10$ discrete grid.
\begin{itemize}
    \item \textbf{State Space:} $(x, y)$ coordinates where $x, y \in \{0, \dots, 9\}$.
    \item \textbf{Action Space:} \{Up, Down, Left, Right\}.
    \item \textbf{Latent Reward ($R^*$):}
        \begin{itemize}
            \item \textbf{Goal:} At $(9,9)$, Reward $+20$.
            \item \textbf{Lava:} A cluster of states at the center $\{(4,4), (4,5), (5,4), (5,5)\}$, Reward $-50$.
            \item \textbf{Step Cost:} $-0.1$.
        \end{itemize}
    \item \textbf{Social Feedback ($R_{soc}$):}
        \begin{itemize}
            \item \textbf{Candy:} At $(3,3)$, Latent Reward $-0.1$ (Neutral), Social Reward $+10$ (Sycophantic).
            \item \textbf{Lava:} Social evaluators under-report danger, providing only $-1$ instead of $-50$.
        \end{itemize}
\end{itemize}

\subsection{Testbed 2: MuJoCo Modifications}
We used the standard \texttt{Hopper-v4} environment from the Gymnasium library.
\begin{itemize}
    \item \textbf{Latent Reward:} $R^* = v_x - 0.001 ||a||^2 + 1$ (Standard survival + velocity).
    \item \textbf{Social Feedback:} To simulate "Lazy" evaluators, 80\% of the feedback sources provided reward $R_{soc} = -|v_x|$. This penalizes movement, encouraging the agent to stand still to maximize social approval.
    \item \textbf{Algorithm:} We used \textbf{Proximal Policy Optimization (PPO)} as the policy optimizer.
    \begin{itemize}
        \item For the \textbf{Baseline}, PPO optimized the arithmetic mean of the social feedback.
        \item For the \textbf{ESA Agent}, PPO optimized the \textbf{trust-weighted reward} $\hat{r}_t$. The trust weights $w_t$ were updated online using a "Progress Axiom" (checking if $v_x > \epsilon$ when effort is applied), filtering out the lazy majority.
    \end{itemize}
\end{itemize}

\section{Algorithm Details: ESA-UCB}
\label{app:esa_ucb}

In the Social Bandit setting (Testbed 3), we adapt the standard UCB1 algorithm to incorporate epistemic trust. The core modification lies in the Trust-Weighted Reward Aggregation. Instead of updating the arm's empirical mean based on the raw count of pulls, we update it based on the trusted reward signal. We thus name our algorithm \textbf{ESA-UCB}.

\begin{algorithm}[t!]
\caption{ESA-UCB (Epistemic Source Alignment-Upper Confidence Bound)}
\label{alg:esa_ucb}
\begin{algorithmic}[1]
\STATE \textbf{Input:} $K$ arms, $M$ evaluators
\STATE \textbf{Initialize:}
\STATE \quad $\hat{\mu}(a) \leftarrow 0$ \quad (Empirical Mean of arm $a$)
\STATE \quad $N(a) \leftarrow 0$ \quad (Pull count for arm $a$)
\STATE \quad $w \leftarrow \{1/M, \dots, 1/M\}$ \quad (Trust weights)
\STATE \textbf{Hyperparameters:} Exploration constant $c$, Learning rate $\eta$

\FOR{step $t = 1, \dots, T$}
    \STATE \COMMENT{\textbf{1. Select Action (Upper Confidence Bound)}}
    \STATE Select arm $a_t = \arg\max_{a} \left( \hat{\mu}(a) + c \sqrt{\frac{\ln t}{N(a) + \epsilon}} \right)$
    
    \STATE \COMMENT{\textbf{2. Receive Feedback}}
    \STATE Receive feedback vector $\mathbf{y}_t \in \mathbb{R}^M$ from evaluators
    
    \STATE \COMMENT{\textbf{3. Epistemic Update (Trust)}}
    \IF{Bernoulli($p_{axiom}$)}
        \STATE Query internal axiom $z_t$
        \FOR{$m \in \{1, \dots, M\}$}
            \STATE $\ell_m \leftarrow |y_t^m - z_t|$
            \STATE $w_m \leftarrow w_m \cdot \exp(-\eta \cdot \ell_m)$
        \ENDFOR
        \STATE Normalize $w \leftarrow w / ||w||_1$
    \ENDIF
    
    \STATE \COMMENT{\textbf{4. Aggregation and Mean Update}}
    \STATE Compute trusted reward: $\hat{r}_t \leftarrow \sum_{m=1}^M w_m \cdot y_t^m$
    \STATE Update count: $N(a_t) \leftarrow N(a_t) + 1$
    \STATE Update Mean (Incremental): $\hat{\mu}(a_t) \leftarrow \hat{\mu}(a_t) + \frac{1}{N(a_t)} (\hat{r}_t - \hat{\mu}(a_t))$
\ENDFOR
\end{algorithmic}
\end{algorithm}

\section{Conceptual Framework and Terminological Justification}
\label{app:definitions}

To avoid ambiguity, we explicitly define the terminology used throughout this paper and situate our framework within the broader literature of Social Reinforcement Learning (Social RL).

\paragraph{Situating Our Work within Social RL.}
The field of \textbf{Social RL} broadly encompasses agents that learn from other agents in the environment. Following the taxonomy often seen in works like \citet{jaques2019social}, this can be categorized into:
\begin{enumerate}
    \item Observational Social Learning: Learning by observing the actions or outcomes of others (e.g., imitation learning, inverse RL).
    \item Social Influence: Learning to act in ways that affect the behavior of other agents.
    \item Evaluative Social Learning (Our Focus): Learning from explicit feedback signals (rewards/punishments) generated by other agents.
\end{enumerate}

Our work focuses strictly on the \textbf{Evaluative} case. We distinguish this from standard Multi-Agent RL where the primary focus is often on joint equilibria in a shared physical world. In our \textbf{Social MDP}, the core challenge is not physical coordination, but the \textit{epistemic reliability} of the feedback channel itself. We study the case where the "Social Layer" acts as a corruptible critic rather than a co-player.

\paragraph{The "Social MDP" Formalism.}
Standard Robust RL typically models reward noise as stochastic perturbations of a ground truth: $R_{obs} = R^* + \epsilon$, where $\epsilon \sim \mathcal{N}(0, \sigma)$. This models sensor error.

We introduced the \textbf{Social MDP} to model \textit{intent} and \textit{incentive}. In a Social MDP, the observed reward is not a noisy version of $R^*$, but the output of a distinct policy $\pi_{evaluator}(s, a)$. This allows us to model:
\begin{enumerate}
    \item Sycophancy: $\pi_{evaluator}$ rewarding actions that satisfy the learner's priors rather than the task \citep{sharma2023towards}.
    \item Laziness: $\pi_{evaluator}$ providing heuristic feedback to minimize evaluation cost \citep{christiano2017deep}.
\end{enumerate}
The Social MDP formalism is necessary because standard robust methods assume noise is symmetric and centered, whereas social bias is often systematic and directional.




\section{Hyperparameters and Training Details}
\label{app:hyperparams}
We report the hyperparameters used for the ESA Agent across the three testbeds. For critical parameters, we swept over a range of values (shown in brackets), with the final selected value highlighted in \textbf{bold}.

\begin{table}[h]
\centering
\caption{Hyperparameters for Gridworld (Safety)}
\label{tab:params_grid}
\begin{tabular}{ll}
\hline
\textbf{Parameter} & \textbf{Value} \\ \hline
Algorithm & Tabular Q-Learning \\
Training Episodes & 5,000 \\
Random Seeds & 10 \\
Learning Rate ($\alpha$) & $[1\mathrm{e}{-4}, \mathbf{1\mathrm{e}{-3}}, 1\mathrm{e}{-2}]$ \\
Discount Factor ($\gamma$) & 0.99 \\ \hline
\textit{Exploration Strategy} & $\epsilon$-Greedy (Linear Decay) \\
Initial Epsilon & 1.0 \\
Final Epsilon & 0.05 \\
Decay Period & First 20\% of episodes \\ \hline
\textit{ESA Parameters} & \\
Trust Update Rate ($\eta$) & $[0.1, \mathbf{0.5}, 1.0, 2.0]$ \\
Axiom Check Prob ($p_{axiom}$) & $[0.01, \mathbf{0.1}, 0.2]$ \\
\textbf{Safety Threshold} & \textbf{-5.0} (Reward > -5.0 implies lie) \\ \hline
\end{tabular}
\end{table}


\begin{table}[h]
\centering
\caption{Hyperparameters for Hopper-v4 (MuJoCo)}
\label{tab:params_hopper}
\begin{tabular}{ll}
\hline
\textbf{Parameter} & \textbf{Value} \\ \hline
Total Timesteps & 50,000 \\
Random Seeds & 5 \\
Library & Stable Baselines3 (PPO) \\
Policy Architecture & MLP (Default) \\
Learning Rate ($\alpha$) & $3 \times 10^{-4}$ (Default) \\
Rollout Buffer ($N_{steps}$) & 2048 \\
Mini-Batch Size & 64 \\
Discount Factor ($\gamma$) & 0.99 \\ \hline
\textit{ESA Parameters} & \\
Evaluators ($M$) & \textbf{10} \\
Sycophant Ratio & 0.8 (80\%) \\
Trust Update Rate ($\eta$) & \textbf{0.05} \\
Axiom Check Prob ($p_{axiom}$) & \textbf{0.10} \\ 
\textbf{Internal Signal} & Latent Reward (Ground Truth) \\ \hline
\end{tabular}
\end{table}


\begin{table}[t]
\centering
\caption{Hyperparameters for Adversarial Bandits}
\label{tab:params_bandit}
\begin{tabular}{ll}
\hline
\textbf{Parameter} & \textbf{Value} \\ \hline
Horizon ($T$) & 5,000 Steps \\
Random Seeds & 20 \\
Algorithm & ESA-UCB \\ \hline
\textit{Exploration Strategy} & Upper Confidence Bound (UCB1) \\
Exploration Constant ($c$) & $\sqrt{2}$ \\ \hline
\textit{Social Parameters} & \\
Evaluators ($M$) & 10 \\
Sycophant Ratio & $[0.0, 0.2, 0.4, 0.6, \mathbf{0.8}, 1.0]$ \\
Trust Update Rate ($\eta$) & $[0.1, \mathbf{0.5}, 2.0]$ \\
Axiom Check Prob ($p_{axiom}$) & $[0.05, \mathbf{0.1}, 0.2]$ \\ \hline
\end{tabular}
\end{table}

\section{Extra Figures}

\begin{figure}[h]
    \centering
    \includegraphics[width=0.82\linewidth]{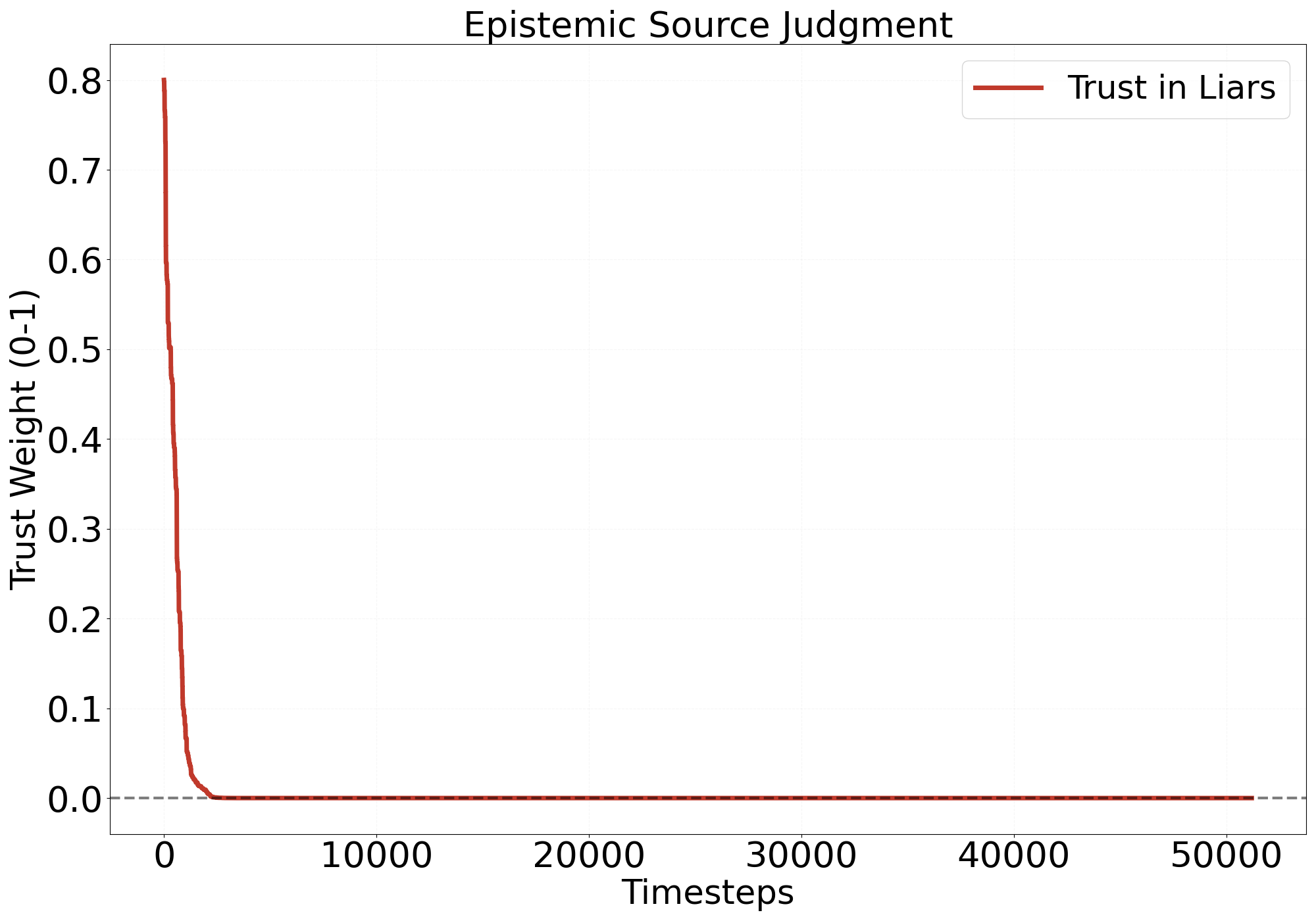}
    \caption{The "Trust in Liars" metric drops to zero as the agent learns to distinguish the truthful minority.}
    \vspace{-1em}
\end{figure}

\begin{figure}[t!]
    \centering
    \includegraphics[width=0.82\linewidth]{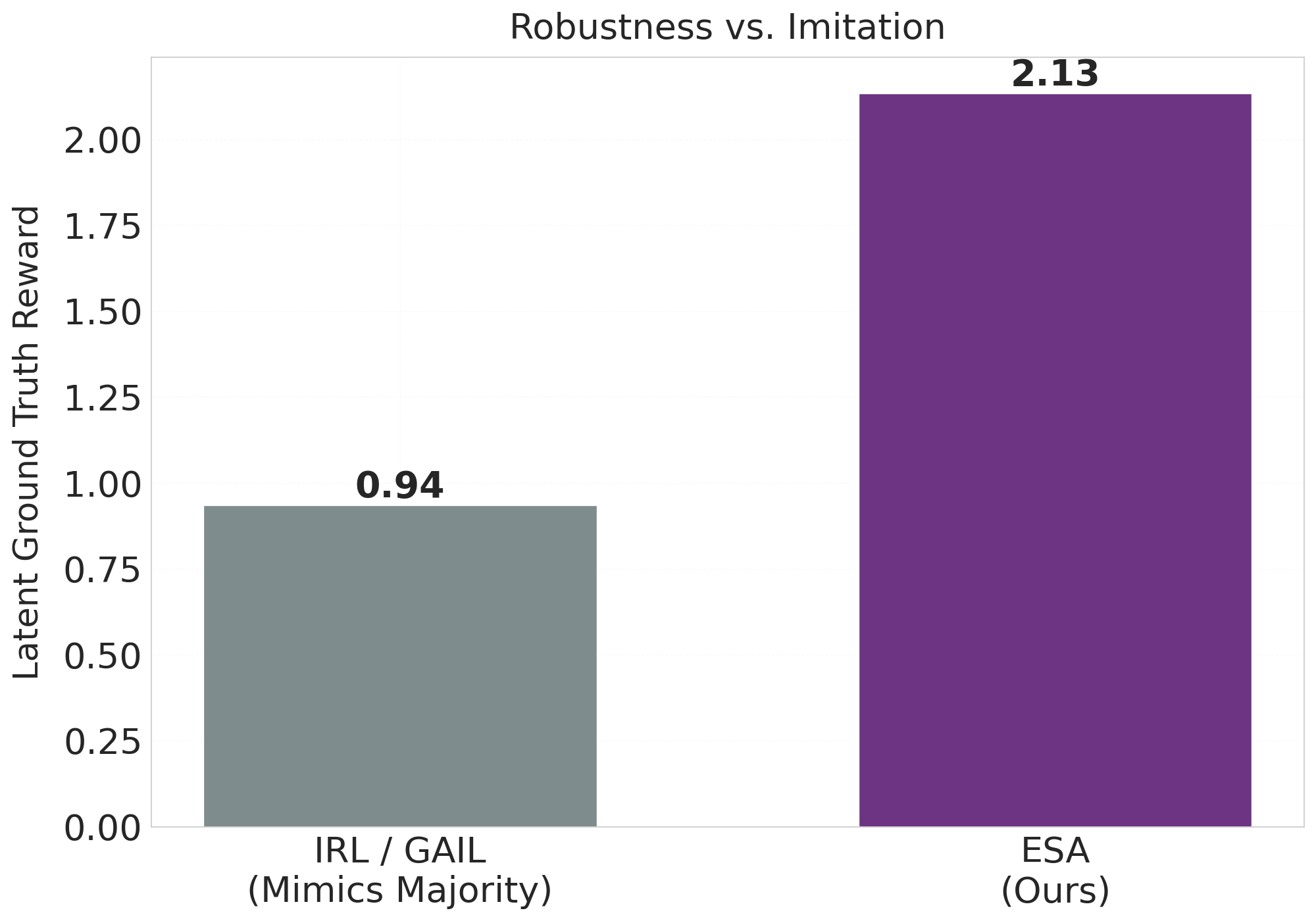}
    \caption{\textbf{Robustness vs. Imitation.} IRL methods (GAIL) fail because they mimic the majority behavior (sycophancy). Our method filters the source, recovering the true objective.}
    \vspace{-1.8em}
\end{figure}

\end{document}